\theoremstyle{plain}
\newtheorem{theorem}{Theorem}[section]
\newtheorem{proposition}[theorem]{Proposition}
\theoremstyle{definition}
\theoremstyle{remark}
\icmltitlerunning{Counting in Small Transformers: Attention and Feed-Forward Layers}
\def\eqref#1{equation~\ref{#1}}
\def\1{\bm{1}}
\def\eps{{\epsilon}}
\DeclareMathAlphabet{\mathsfit}{\encodingdefault}{\sfdefault}{m}{sl}
\SetMathAlphabet{\mathsfit}{bold}{\encodingdefault}{\sfdefault}{bx}{n}
\newcommand{\R}{\mathbb{R}}
\DeclareMathOperator*{\argmax}{arg\,max}
\newcommand{\F}{F}
\newcommand{\f}{f}
\newcommand{\Xembd}{\bar{\mathbf{x}}}
\newcommand{\xembd}{\bar{x}}
\newcommand{\A}{\mathbf{A}}
\newcommand{\Alin}{\A_\mathrm{lin}}
\newcommand{\Alinsftm}{\A_\mathrm{lin+sftm}}
\newcommand{\Adotsftm}{\A_\mathrm{dot+sftm}}
\newcommand{\Adot}{\A_\mathrm{dot}}
\newcommand{\sftm}{\text{softmax}}
\newcommand{\W}{W}
\newcommand{\Am}{A}
\newcommand{\Mlinear}{\texttt{lin}}
\newcommand{\Mdot}{\texttt{dot}}
\newcommand{\MBOS}{\texttt{bos}}
\newcommand{\Mlinearsftm}{\texttt{lin+sftm}}
\newcommand{\Mdotsftm}{\texttt{dot+sftm}}
\newcommand{\MBOSsftm}{\texttt{bos+sftm}}
\begin{document}

\twocolumn[
\icmltitle{Counting in Small Transformers: The~Delicate~Interplay~between~Attention~and~    Feed-Forward~Layers}



\icmlsetsymbol{equal}{*}

\begin{icmlauthorlist}
\icmlauthor{Freya Behrens}{yyy}
\icmlauthor{Luca Biggio}{xxx}
\icmlauthor{Lenka Zdeborová}{yyy}
\end{icmlauthorlist}

\icmlaffiliation{yyy}{Statistical Physics of Computation Laboratory,
Ècole polytechnique fédérale de Lausanne (EPFL), Lausanne, Switzerland}
\icmlaffiliation{xxx}{Department of Computing Sciences,
Università Bocconi, Milan, Italy}

\icmlcorrespondingauthor{Freya Behrens}{freya.behrens@epfl.ch}

\icmlkeywords{Machine Learning, ICML}
\vskip 0.3in
]



\printAffiliationsAndNotice{}  

\begin{abstract}
Next to scaling considerations, architectural design choices profoundly shape the solution space of transformers. In this work, we analyze the solutions simple transformer blocks implement when tackling the histogram task: counting items in sequences. Despite its simplicity, this task reveals a complex interplay between predictive performance, vocabulary and embedding sizes, token-mixing mechanisms, and feed-forward layer capacity. We identify two theoretical counting strategies transformers adopt, relation-based and inventory-based counting, each defining distinct learning regimes for the task. These strategies dictate how functionality is distributed between attention and feed-forward layers. We further show that adding softmax and beginning-of-sequence tokens allow for more robustness when embedding dimensions are comparatively small. Empirical introspection of trained models closely confirms both the learning regimes of the various architectures and the formation of these strategies during training. We demonstrate how a basic task that requires only aggregation and selection is significantly impacted by minor design changes.
\end{abstract}

\section{Introduction}
Transformers are the key neural network behind many recent deep learning advances, most notably large language models (LLMs). Their success is partly due to their versatility in processing diverse data types, including text, images, and video, represented as sequences of tokens~\citep{swin,girdhar2019video,gpt3}. While scale has been a key factor in unleashing the potential of these models, it is remarkable that their architecture still largely follows the same simple template of the original transformer model proposed by \citet{vaswani2023attention}. At its core, a single transformer block primarily alternates two basic components: the token-mixing attention mechanism and a standard fully connected multi-layer perceptron. At a high level, the attention mechanism mixes the tokens, while the multi-layer perceptron applies a nonlinear feature transformation identically to each token. Despite the widespread use of transformers, there is no clear consensus on the distinct roles of their components, how they interact, or if they can be substituted with alternative modules~\citep{tolstikhin2021mlpmixer, bozic2023rethinking,gu2023mamba}. In particular, the specific contribution of each architectural element to the model's hypothesis space --the range of algorithms it can learn and implement in practice-- remains opaque \citep{weiss2021thinking,deletang2023neural,generalization-on-the-unseen, ouellette2023counting}.

In this work, we investigate this question  for algorithms that compare and aggregate information from a mechanistic perspective \citep{cammarata2020thread,olah2020zoom,elhage2021mathematical,michaud2024quantization,ouellette2023counting} and focus on the histogram task~\citep{weiss2021thinking} as a prototypical problem. It consists of predicting the number of appearances of each token in the input sequence processed by the model – counting - and solving it requires both comparison and aggregation. Even modern language models with up to 8B parameters currently fail to solve this task robustly and efficiently in-weights, rather than via chain-of-thought, (Appendix \ref{app::llms}).
This failure, despite the task's apparent simplicity, motivates a study of the relative role of different architectural components and their impact on the final solutions implemented by the model in a \textit{controlled} setting. To this end, we focus on models following the architectural template of primitive transformer blocks, i.e. alternating a token-mixing attention mechanism and a multi-layer perceptron.

In our analysis, we provide explicit theoretical constructions (parameter configurations) for a range of such architectures reaching perfect accuracy in a model-dependent hyperparameter regime. In a subsequent step, we compare these algorithms with the performance and mechanistic behavior of models trained from data. Our findings reveal that this class of models is capable of implementing 
strikingly different solutions for the histogram task, with a strong dependence on the scale of the model's hyperparameters and the type of token-mixing mechanism utilized. Our main contributions are as follows:%
\begin{itemize}[leftmargin=1em,itemsep=1pt]
    \item We identify two algorithmic strategies for solving the histogram task: \textit{relation-based counting}, which uses local pairwise token comparisons, and \textit{inventory-based counting}, which counts all tokens in the alphabet and extracts the desired count at a given position.
\item We clarify how the emergence of these strategies depends on the architecture-specific inductive biases. Relation-based counting is memory and compute-efficient, leveraging attention for comparisons. Inventory-based counting relies on token-mixing for aggregation, but uses feed-forward layers to implement a comparison to the complete alphabet with higher computational demands.
\item We show how the embedding size required for perfect solutions relates to alphabet size and the maximal sequence length. Due to the discrete nature of the task, near-orthogonality is sufficient in many cases. Gains in prediction accuracy compared to models of the same capacity can arise from the softmax operator and dot-product attention reducing noise from linear dependence.
\end{itemize}

Our code is publicly available at \href{https://github.com/SPOC-group/counting-attention}{https://github.com/SPOC-group/counting-attention}.

\section{Background and Notation}\label{sec:background}

\paragraph{Architecture.} As inputs, we consider sequences of tokens $\mathbf x = (x_1,x_2,\cdots,x_L) \in \mathcal{T}^L$ from the alphabet $\mathcal{T} = \{1,\cdots, T\}$. A sequence of outputs is $\mathbf y = (y_1, \cdots, y_L)$ has the same length as the input sequence, where each output token $y_\ell \in\{1,...,C\}$, with $C\leq{L}$.
We analyze several 1-layer model architectures where a token-mixing mechanism is followed by a per-token feature transformation.
Formally, we consider a models  $\F : \mathcal{T}^L \to \mathcal{C}^L$ defined for the positions $\ell=1,\cdots,L$ as
\begin{align}
&\F(\Xembd)_\ell = \argmax_{c \in \{1,\cdots,C\}} \f(\xembd'_\ell)_c\,;\,\,\,\,\,\xembd'_\ell = \xembd_\ell+[\A(\Xembd)\Xembd ]_\ell
\end{align}
with the token mixing matrix $\A : \R^{L \times d} \to \R^{L \times L}$ and the token-wise feature transformation $\f : \R^d \to \R^C$. The embedding is $\Xembd\in\mathbb{R}^{L\times{d}}$, where $\xembd_\ell$ denotes its $\ell$-th row, is obtained by passing the input sequence $\mathbf x$ into a standard embedding layer (learnable lookup-table) of dimension $d$.
We refer to the embedding associated with token $t \in \mathcal{T}$ as $e_t \in \R^d$ or $e_{x_\ell} \in \R^d$ for the embedding of the token $x_\ell$ at position $\ell$. We do not include positional embeddings due to the inherent permutation equivariance of the histogram task. We refer to the vector $\xembd_{\ell}'$, for each position $\ell=1,\cdots,L$, as the mixed token.
Note that we assume that all operations in the network are executed with infinite precision. We comment on this assumption when it becomes problematic.

\paragraph{Token Mixing.} We consider two types of mixing mechanisms $\A$ with different activation functions and a single head. We refer to the case where the function $\A$ is constant in $\Xembd$ as \textit{linear mixing} (\Mlinear), e.g.
    $\Alin(\Xembd) =  \Am$ and $\Alinsftm(\Xembd) = \sftm( \Am)$,
where $\Am \in \R^{L \times L}$ is a learnable matrix and the softmax operator is applied row-wise. The number of learnable parameters is therefore $L^2$.\\
As an alternative mixing structure, which we refer to as \textit{dot-product mixing} (\Mdot{}), we consider the popular attention mechanism which constructs the matrix $\A$ to be explicitly dependent on the inputs, i.e.
\begin{align}\label{eq::attention}
\Adot(\Xembd)= \frac{1}{\sqrt{d}}\Xembd \W_Q \W_K^T \Xembd^T
\end{align}
and $\Adotsftm(\Xembd)=\sftm\left(\Adot(\Xembd)\right)$ where $\W_Q$ and $\W_K$ are learnable $d\times{d}$ matrices. Note that, without loss of generality, we assume the value matrix to be the identity. Then the number of parameters for dot-product mixing is $2d^2$. In line with previous work \citep{weiss2021thinking}, for architectures employing the dot-product mixing, we also analyze models utilizing the so-called beginning-of-sequence (BOS) token. This special token, indicated with the symbol $\$$, is appended to the original input $\mathbf{x}$ resulting in a new sequence $\tilde{\mathbf x} = (\$, x_1,x_2,\cdots,x_L)$ of length $L+1$. We will refer to the architecture that includes the BOS token as \MBOS{}.

\paragraph{Feature Transformation.}
The feature transformation is a single hidden layer perceptron with ReLU activations. The hidden layer is of dimension $p$. The function $\f$ is applied identically to every mixed token $\xembd'_{\ell}$ for $\ell=1,\cdots,L$,
as:
\begin{align}\label{eq::feature_transformation}
 \f(\xembd'_{\ell}) = \mathrm{ReLU}(\xembd'_{\ell} \W_1+ b_1)\W_2+b_2
\end{align}
where $ \f(\xembd'_{\ell}): \R^{d} \to \R^{{C}}$
and where the weights have the appropriate dimensions to accommodate a hidden layer of size $p$, i.e. $\W_1\in\R^{d\times{p}}, b_1\in\R^{{p}}, \W_2\in\R^{p\times{C}}$ and $b_2\in\R^{{C}}$.

\section{Experimental Setup}\label{sec:exps}

\newcommand{\hist}[2]{\mathop{hist}_{#2}(#1)}
\paragraph{Task and Dataset.} We consider a simple algorithmic task that is referred to as \textit{histogram}: given a sequence of tokens, the goal is to return a sequence of the same length where each entry represents the number of times the corresponding input token appears in the entire sequence. For example, given $\mathbf x = [A,B,D,D,B,B]$, the output will be $\mathbf y = [1,3,2,2,3,3]$. 
We define the count of a token $t$ in the sequence $\mathbf{x}$ at position $\ell$ as $\hist{\ell}{\mathbf x}$.
In our experiments, we consider i.i.d. distributions of sequences of length $L$ from an input alphabet of size $T$, where $L \leq T$.
Our sampling strategy relies on first sampling a set of partitions, and then assigning a token to each partition (see App.~\ref{app::data-gen} for details). This allows for a close to uniform distribution over the values of $\mathbf y$.
\paragraph{Models and Training.} We measure the performance on the histogram task of the four different variants of the token mixing models described in Sec.~\ref{sec:background}, i.e. \Mlinear{} and \Mdot{}, with or without the softmax (\texttt{+sftm}). The token embeddings are jointly learned with the model parameters. We consider the dimension of the embedded tokens $d$, and the hidden layer size $p$ of the feature transformation. We also consider the model \MBOS{}(\texttt{+sftm}) where every input sequence is prefixed with the BOS token prior to entering a dot-product mixing layer (with softmax).\\
Models are trained with Adam with a learning rate of $10^{-3}$ with cross-entropy loss for $500$ epochs with a batch size of $32$. We consider the online learning setting where for each new epoch we generate a dataset of fresh $10,000$ samples.
The accuracy is computed from $3,000$ independent samples.

\section{Learning Regimes in Counting}\label{sec:results}
In order to understand the contributions of the different architectural components, we analyze the performance of the above-stated models with varying mixing mechanisms in different learning regimes characterized by the embedding dimension~$d$ and the number of hidden neurons~$p$ of the feed-forward module.\\
Fig.~\ref{fig:learning-regimes} shows the accuracy attained by learned models for sequences of length $L=10$ with $T=32$ different input tokens. We observe that the models exhibit both high and low
 accuracy across various parameter regimes, with a strong dependence on the architecture.
Fig.~\ref{fig:param_counts_main} further clarifies that the parameter efficiency under different architectures varies substantially. To investigate the underlying mechanisms we devise theoretical constructions and mechanistic interpretations of the learned solutions. We delineate two regimes in each of the parameters:
for the embedding dimension $d$ we distinguish the regime of non-orthogonal embeddings ($d < T$) and of possibly orthogonal embeddings ($d \geq T$).
For hidden layer size $p$ we distinguish the regime where models can sense only a constant number of directions/features ($p=1$) or one scaling as the alphabet size ($p=T$). 
In App.~\ref{app::sec:2layer} we show that a similar phenomenology persists for 2-layer transformers.\looseness=-1

\subsection{\texorpdfstring{$d \geq T$}{d >= T}: Orthogonal token embeddings are separable}\label{sec::d>T}

When the model dimension $d$ is equal or greater than the alphabet size $T$, tokens can be represented by embeddings that are mutually orthogonal. Assuming for $t \in \mathcal{T}$ there are such mutually orthogonal embeddings $e_t \in \mathbb{R}^d$ with a norm of 1, the overlap $\langle e_s, e_t \rangle = 0$ for distinct tokens $t \neq s$ and it is 1 when $t=s$. In such a scenario, a linear combination of token embeddings preserves magnitude (count information) about elements from the alphabet. A weighted sum of tokens, denoted as $e' = \sum_{t \in \mathcal{T}} \alpha_t e_t$, can be broken down into the original tokens using projections on the original token embeddings, where $\alpha_t = \langle e_t, e' \rangle / \|e_t\|^2_2$.

\begin{figure}[H]
\begin{center}
\centerline{\includegraphics[width=1.06\columnwidth]{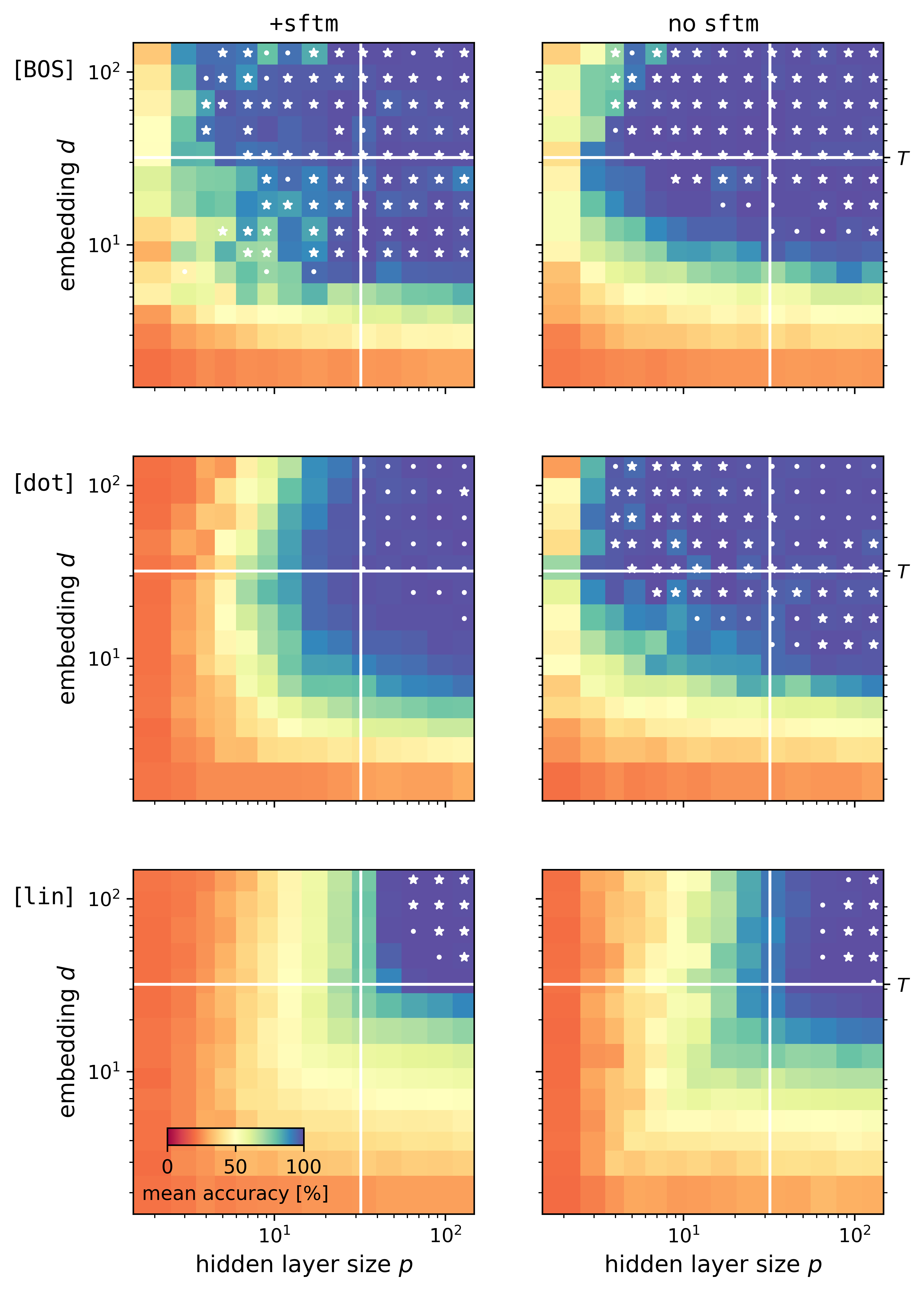}}
\caption{\textit{Accuracy on the histogram task for different 1-layer architectures.} Mean accuracy for varying embedding size $d$, hidden layer size $p$,  for fixed $T=32$ and $L=10$ for the different token mixing mechanisms \Mdot, \MBOS{} and \Mlinear. \textit{(Left)} Models with softmax; \textit{(Right)} Models without softmax. Average over $5$ runs for every $d, p\in\{1,2,3,4,6,8,12,16,23,32,45,64,91,128\}$. Vertical and horizontal white lines indicate $p=T$ and $d=T$ respectively. White stars (dots) mark a $100\%$ ($>99\%$) accuracy configuration was found for at least one of the five runs.\looseness=-1
    }
    \label{fig:learning-regimes}
\end{center}
\end{figure}

In the following, we use this property to \textit{theoretically} construct the weights for all models that solves the task when $d \ge T$.
Remarkably, the constructions require different numbers of hidden neurons~$p$ depending on the mixing mechanism. This demonstrates the interplay of the mixing layer and the feature transform: for some mixing mechanisms, the latter needs to implement \textit{inventory-based counting} (IC) (requiring $p \ge T$), and for others, \textit{relation-based counting} (RC) (where $p\ge 1$ is sufficient).
\newcommand{\adiff}{a_{\neq}}
\newcommand{\asame}{a_{=}}
\newcommand{\tBOS}{t_{\mathrm{BOS}}}
\newcommand{\eBOS}{e_{\mathrm{BOS}}}

\subsubsection{Relation-based counting: Leveraging dot-product mixing}
When an extra beginning-of-sequence token $\tBOS$ is available in \MBOS, it can be used to extract information about a token's count $\hist{\ell}{\mathbf x}$ in the attention layer of the network through its attention score \citep{kazemnejad2023impact}. 
In the literature, the beginning (or end) of sequence tokens have been linked to model-internal computations, such as counting.
In \cite{weiss2021thinking}, it is shown that the RASP language can solve the histogram task with one layer and one attention head. We confirm empirically that \MBOS{} and \MBOSsftm{} reach (close to) 100\% accuracy whenever $d>T$, and we verify that a relation-based counting algorithm can be theoretically implemented in these two architectures by construction. 
\begin{proposition}[RC with BOS token]\label{prop:RC-with-BOS}
    For \MBOS{} and \MBOSsftm{} and a given $L\geq2$, there each exists a configuration of weights that solves the histogram task at 100\% accuracy, given that $d \geq T > 2$ and $p = 1$.
\end{proposition}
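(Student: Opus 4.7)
The plan is to exhibit explicit weights that (i) use the attention layer to write the count $k := \hist{\ell}{\mathbf{x}}$ into a scalar feature of $\xembd'_\ell$ in a way independent of the identity of $x_\ell$, and then (ii) use the single-hidden-neuron feed-forward to turn that scalar into logits whose $\argmax$ is $k$.

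For the embeddings, I would choose $T$ mutually orthonormal vectors $e_1,\dots,e_T\in\R^d$ for the alphabet (possible since $d\ge T$) and a BOS embedding $e_{\mathrm{BOS}}$ with equal overlap $\beta$ on every $e_t$ --- either $\beta=0$ when $d>T$, or $e_{\mathrm{BOS}}=\tfrac{1}{\sqrt T}\sum_{t\in\gT}e_t$ when $d=T$. For \MBOS{} (no softmax), I would pick $\W_Q\W_K^T/\sqrt d$ on the token subspace so that $e_t^T\W_Q\W_K^T e_s/\sqrt d$ equals $1$ iff $t=s\in\gT$ and $0$ otherwise, yielding $[\A(\Xembd)\Xembd]_\ell=k\,e_{x_\ell}$ and $\xembd'_\ell=(k+1)\,e_{x_\ell}$. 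For \MBOSsftm{}, I would set the match and BOS scores equal to a large $\lambda$ and the non-match score to any value strictly smaller than $\lambda$ in asymptotic growth, kept consistent with the linear dependence forced when $d=T$; in the $\lambda\to\infty$ limit the softmax row puts mass $1/(k+1)$ on each match and on BOS, so that $\xembd'_\ell=\tfrac{2k+1}{k+1}\,e_{x_\ell}+\tfrac{1}{k+1}\,e_{\mathrm{BOS}}$.

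For the feed-forward, I would pick $\W_1\in\R^{d\times 1}$ so that $h:=\W_1^T\xembd'_\ell$ depends only on $k$, not on $x_\ell$: for \MBOS{} the choice $\W_1=\sum_{t\in\gT}e_t$ gives $h_k=k+1$; for \MBOSsftm{} the choice $\W_1=e_{\mathrm{BOS}}$ together with the equal-overlap property gives $h_k=((2k+1)\beta+\|e_{\mathrm{BOS}}\|^2)/(k+1)$, which is strictly monotone in $k\in\{1,\dots,L\}$ for $T>2$. Choosing $b_1$ large enough ensures the ReLU acts as the identity on every $h_k$. For $\W_2\in\R^{1\times C}$ and $b_2\in\R^C$ I would take the tangent lines to the strictly convex map $x\mapsto x^2$ at the $L$ distinct values $h_1,\dots,h_L$, namely $\W_2^{(c)}=2h_c$ and $b_2^{(c)}=-h_c^2$; by tangency, at $h=h_k$ the $c=k$ line evaluates to $h_k^2$ while every other line lies strictly below, so $\argmax_c=k$ exactly.

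The main obstacle is the softmax case with $d=T$, where $e_{\mathrm{BOS}}$ is forced to be a linear combination of the $e_t$, so the desired bilinear form of $\W_Q\W_K^T$ is not free on the BOS key; the non-match score must therefore be propagated from the prescribed match and BOS scores to keep the system simultaneously solvable, and the symmetric choice of $e_{\mathrm{BOS}}$ is what makes this propagation still allow the non-match contribution to vanish as $\lambda\to\infty$. A secondary point is that under the infinite-precision assumption the limit $\lambda\to\infty$ need only be approached closely enough to keep the $L$ values $h_k$ distinct, which is then sufficient to preserve the $\argmax$ on every admissible input sequence.
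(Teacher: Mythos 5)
Your construction is correct and follows essentially the same route as the paper's: orthonormal token embeddings, a symmetric BOS direction, an identity-like query--key product so that a single hidden neuron reads off a scalar strictly monotone in the count, and a piecewise-linear output layer (your tangent-lines-to-$x^2$ readout is an equivalent variant of the paper's recursive $W_2, b_2$ construction). The only slip is that in the non-softmax case with $d=T$ the BOS key cannot be assigned score $0$, for the very linear-dependence reason you identify in the softmax case; however, the forced overlap merely adds the same constant to every $h_k$, so distinctness, monotonicity, and the readout are unaffected.
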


We prove this by construction in App.~\ref{app::sec:rel-based:bos}-\ref{app::sec:rel-based:bossftm} and we provide the intuition of the proof in the following. 
For \MBOS{} we set the $\tBOS$ embedding to $\eBOS=\sum_{t \in \mathcal T} e_t$ and take the mutually orthogonal token embeddings $e_t$ to have norm 1.
Assuming that $\tBOS$ is at the first position of the sequence of now length $L+1$, a simple dot-product operation in the attention mechanism (with $Q, K=d^{\frac{1}{4}}\mathbb I_d$) will lead to an attention matrix with entries:
\begin{align*}
    a_{\ell m} = \begin{cases}
        T & \mathrm{if }\hspace{0.1cm} \ell=m=1\\
        1 & \mathrm{if }\hspace{0.1cm} (\ell>1,m=1)\,\, \mathrm{ or} \hspace{0.1cm} (\ell,m>1, x_\ell = x_m) \\
        0 & \mathrm{if }\hspace{0.1cm} \ell,m>1, x_\ell \neq x_m \\
    \end{cases}.
\end{align*}

Projecting the mixed token $\xembd'_\ell$ onto the $\tBOS$ we obtain $\langle\xembd'_\ell,\eBOS\rangle = T + \hist{\ell}{\mathbf x}+1$, i.e. $\eBOS$ is the single relevant direction for the prediction. Its magnitude relates linearly to $\hist{\ell}{\mathbf x}$. A \textit{single} hidden neuron $p=1$ suffices and the output layer can transfer the count into a categorical representation.
For \MBOSsftm{} one needs to further account for the non-linearity of the softmax as described in App.~\ref{app::sec:rel-based:bossftm}.

\begin{figure}
\vskip 0.2in
\begin{center}
\centerline{\includegraphics[width=\columnwidth]{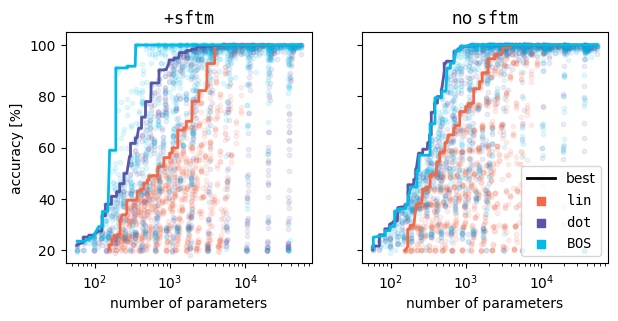}}
\caption{ \textit{Test accuracy vs. total number of learned parameters.} The data is the same as generated for Fig.~\ref{fig:learning-regimes}, every data point is the a single experiment and we show the convex hull in solid lines.}
    \label{fig:param_counts_main}
\end{center}
    \vskip -0.2in
\end{figure}%

In the learned models, some instances in the given regime indeed achieve 100\% accuracy. While their weights do not correspond exactly to the relation-based counting algorithm described previously, they exhibit similar properties. In Fig.~\ref{fig:BOS-model-interpretation}, we show for \MBOSsftm{}, that $\tBOS$ indeed plays a special role in the \textit{learned} model: in the attention matrix its activation can be interpreted as a proxy for the number of occurrences of $x_\ell$, as it has different values for tokens that occur a different amount of times. 
Other entries of the attention matrix are comparatively low when the compared tokens are the same and high when they are different.
The comparison operation naturally provided by the dot-product allows the model to extract the count of the same tokens, for each token in the sequence.
We also show in Fig.~\ref{fig:BOS-model-interpretation} how the presence of the $\tBOS$ determines the final prediction through the application of $\f$.
\begin{figure*}
    \centering
    \includegraphics[width=0.99\linewidth]{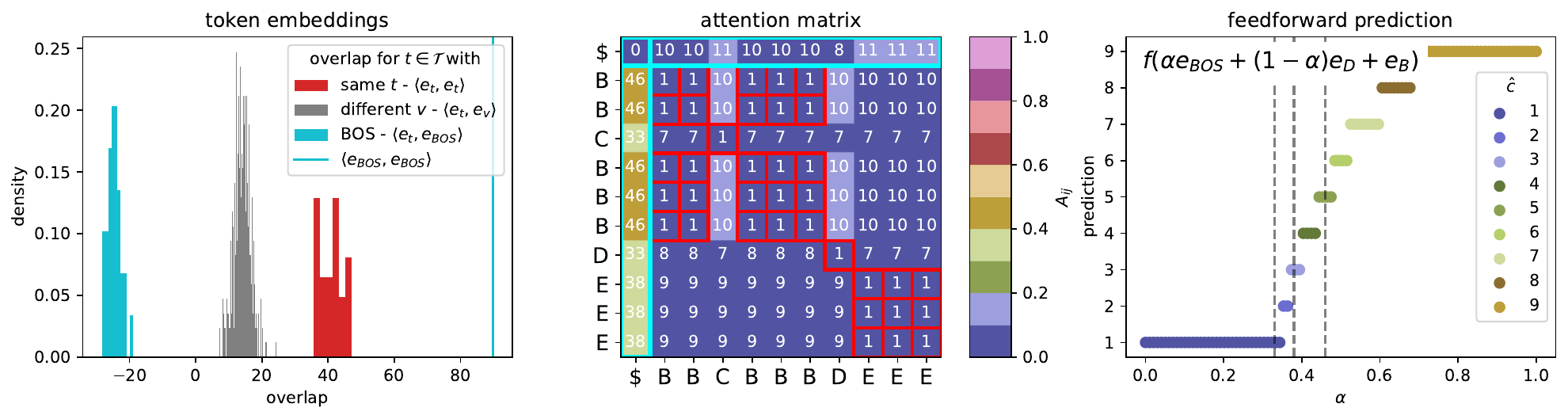}\vspace{-0.8em}
    \caption{ \textit{Relation-based counting with \MBOSsftm{} ($T=32,L=10,p=2,d=45$).} This model achieves $99.9\%$ accuracy. It was selected as the best model from all our experiments with $p=2$. 
    \textit{(Left)} The tokens overlap (cosine similarity) with the {\color{red}same tokens}, {\color{gray}different tokens} and the {\color{cyan} BOS} all concentrate around different values.
    \textit{(Middle)} This is reflected in the attention matrix after the application of the row-wise softmax.
     The $\tBOS$ (`\$') in the first column $a_{\ell,0}$ becomes a proxy for the count of $x_\ell$.
    \textit{(Right)} To demonstrate that the feedforward network is only sensitive to this direction, we show its count predictions for a mix of tokens and fix $\bar x'=\alpha \eBOS + (1-\alpha) e_D+ e_B$. The contribution $\alpha$ of the BOS token to the intermediate $\bar x'$ is varied and $D,B$ are two specific elements of the alphabet $\mathcal{T}$, represented by their embeddings $e_D$ and $e_B$. The $y$-axis shows the predicted class of the feedforward layer $f(\bar x')$ for a given $\alpha$.
     We measure $a_{\ell,0}$, the actual contribution of the BOS token's for the sequence in the middle plot for the letters that occur $1, 3$ and $5$ times. The same experiment is repeated for different elements of the alphabet in App.~\ref{app::sec:BOSmixing}, showing independence of the count prediction on the other tokens present in the sequence.}
    \label{fig:BOS-model-interpretation}
\end{figure*}
Surprisingly, the \Mdot{} model (without the softmax) reaches a an empirical performance comparable to \MBOS{} in the regime $d \geq T$ and $p=1$, even though it does not have an extra token available.
\begin{proposition}[RC with tagged embeddings]\label{prop:RC-dot}
    For \Mdot{} and a given $L,T>2$, there exists a configuration of weights that solves the histogram task at 100\% accuracy, given that $d \geq T > 2$ and $p = 1$.
\end{proposition}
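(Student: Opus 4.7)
Since \Mdot{} lacks a BOS token, the count must be extracted purely from interactions among the sequence tokens together with the residual stream. The plan is to mirror the BOS construction from Proposition~\ref{prop:RC-with-BOS} by forcing the attention matrix to be an exact same/different indicator and then choosing a feed-forward readout direction that has identical overlap with every token embedding, effectively promoting the uniform ``sum direction'' $\sum_t e_t$ to act as a tag in place of the missing $\eBOS$.

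First, I would take mutually orthonormal token embeddings $e_t = u_t \in \R^d$ for $t \in \mathcal T$, which is feasible since $d \geq T$. Setting $\W_Q = \W_K = d^{1/4} I$ then gives
\begin{equation*}
\Adot(\Xembd)_{\ell m} = \tfrac{1}{\sqrt d}\,\xembd_\ell (\sqrt d\, I)\, \xembd_m^T = \langle u_{x_\ell}, u_{x_m}\rangle = \mathbf{1}[x_\ell = x_m].
\end{equation*}
Combined with the residual, the mixed token collapses to $\xembd'_\ell = (1 + \hist{\ell}{\mathbf x})\, u_{x_\ell}$, so the count is encoded as a magnitude along the token-specific direction $u_{x_\ell}$.

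The delicate step, and the main obstacle, is that this magnitude lies along a direction that depends on $x_\ell$, while $p=1$ leaves the feed-forward with only a single linear feature to sense. The resolution is to pick the hidden column $\W_1 = \tfrac{1}{\sqrt T} \sum_{t \in \mathcal T} u_t$, which has overlap $1/\sqrt T$ with every embedding, so $\xembd'_\ell \W_1 = (1+\hist{\ell}{\mathbf x})/\sqrt T$ independently of $x_\ell$. This uniform direction is the ``tag'': each embedding is effectively marked by its constant projection onto $\sum_t u_t$, playing the role $\eBOS$ played in Proposition~\ref{prop:RC-with-BOS}.

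Finally, choosing $b_1 = -1/\sqrt T$ makes the ReLU output equal $\hist{\ell}{\mathbf x}/\sqrt T$ (positive, since counts are at least $1$), and then taking $\W_2[c] = c$ with $b_2[c] = -c^2/(2\sqrt T)$ yields class-$c$ logit $(kc - c^2/2)/\sqrt T$ with $k = \hist{\ell}{\mathbf x}$, maximized over integers $c \in \{1,\dots,C\}$ exactly at $c = k$. The remainder is routine verification; the conceptual content is the recognition that $\sum_t u_t$ can serve as a universal tag replacing BOS, allowing a single hidden neuron to decode the count encoded by the attention stage.
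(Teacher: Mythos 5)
Your construction is correct and every step checks out: with orthonormal embeddings and $W_Q=W_K=d^{1/4}I$ the attention matrix is the exact same/different indicator, the mixed token collapses to $(1+k)\,u_{x_\ell}$ where $k$ is the count of $x_\ell$ in $\mathbf x$, the uniform readout $W_1=T^{-1/2}\sum_t u_t$ makes the single hidden pre-activation $(1+k)/\sqrt{T}$ independent of the identity of $x_\ell$, and the quadratic output layer $(W_2)_c=c$, $(b_2)_c=-c^2/(2\sqrt T)$ has its unique integer argmax at $c=k$. However, your route genuinely differs from the paper's in where the count signal lives. The paper literally tags the embeddings, setting $e_t=\tilde e_t+e_{\mathrm{cnt}}$ with $e_{\mathrm{cnt}}=\sum_t\tilde e_t$, so that every attention score equals $T+2$ (different) or $T+3$ (same); the readout along $e_{\mathrm{cnt}}$ then returns $1+\sum_m a_{\ell m}=k+1+L(T+2)$, i.e.\ the count is recovered as a \emph{sum of attention scores}, exactly the quantity $1+k\,a_{=}+(L-k)\,a_{\neq}$ advertised in the main text. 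You instead keep the embeddings untagged and encode the count as the \emph{magnitude} of the mixed token along its own direction $u_{x_\ell}$, with the ``tag'' living solely in the readout vector; this works here precisely because the cross-token attention scores vanish exactly. Your version is more elementary for $d\geq T$, but the paper's tagged form is the one that degrades gracefully when exact orthogonality fails: it only needs $a_{=}\neq a_{\neq}$ plus a readout direction shared by all tokens, which is the structure reused in the $d<T$ constructions of Proposition~\ref{prop:mutualcoherence}, where a dedicated counting coordinate is appended to near-orthogonal vectors. Your output layer (logits quadratic in the class index) also replaces the paper's piecewise-linear staircase recursion for $W_2,b_2$; both are valid linear decoders of a scalar count into an argmax over classes.
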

We prove this in App.~\ref{app::sec:rel-based:dot}.
Intuitively, the construction uses a single common direction $e_{\text{cnt}}$ that is added to the otherwise mutually orthogonal token embeddings.
A dot-product mixing then leads to $a_{\ell m}=\adiff>0$ when $x_\ell$ is different from $x_m$, and $a_{\ell m}=\asame>0$ when tokens are the same.
Then, the number of counts can be easily extracted from the dot-product $\langle e_{\text{cnt}}, \bar{x}_\ell' \rangle$ of the counting token with the mixed token $\bar{x}_\ell'$, i.e. $\langle e_{\text{cnt}}, \bar{x}_\ell' \rangle \propto 1+\hist{\ell}{\mathbf x} \asame + (L-\hist{\ell}{\mathbf x}) \adiff$. We can, therefore, obtain a perfect accuracy implementation in the regime where $d \geq T$ with only a single hidden neuron. This  is in line with the observed empirical performance by \Mdot{} even without access to a BOS token.

\paragraph{Dot-product attention with softmax fails to implement relation-based counting.}
Since the dot-product mechanism can naturally be used in relation-based counting, one might expect the \Mdotsftm{} model to implement the same mechanism. However, and maybe surprisingly so, we empirically observe a marked difference between \Mdot{} and \Mdotsftm{} in Fig.~\ref{fig:learning-regimes}. \Mdotsftm{} only starts performing close to 100\% accuracy when both the model dimension $d$ and the number of hidden neurons $p$ are larger than the number of tokens $T$.
To understand why it fails to learn for $p=1$, we show the attention matrix of \Mdotsftm{} in Fig.~\ref{fig:softmax-introspection}. Notably, it is based on the semantics, as $(\Adotsftm)_{\ell m}$ is higher when $x_\ell = x_m$ than otherwise. However, the normalization effect of the softmax activation prevents the development of a meaningful counter subspace that is needed in the relation-based algorithm.
As a result of normalization, the attention scores are $\sum_{m} a_{\ell m} = 1$, so any direction present in all tokens (and by the symmetry of the task, it would need to be present in all tokens) would be uninformative after the token mixing -- its weight would be one regardless of the input sequence and would therefore not carry information about the count.
Before, the model \MBOSsftm{} circumvented this problem by adding the extra token with a special functionality that does not need to be counted.
Because this is not possible for \Mdotsftm{}, the architecture fails to perform well for $p=1$ -- it now needs to measure more than one direction in the feed-forward module.\\
In the following, we show that a solution of the histogram task can still be achieved through an inventory-based counting algorithm with $p\geq T$. We detail this in the following section, for the example of \Mlinear. The statement for \Mdotsftm{} is given in App.~\ref{app::sec:inventory-based-counting}.
\begin{figure*}
    \centering
    \includegraphics[width=1.0\textwidth]{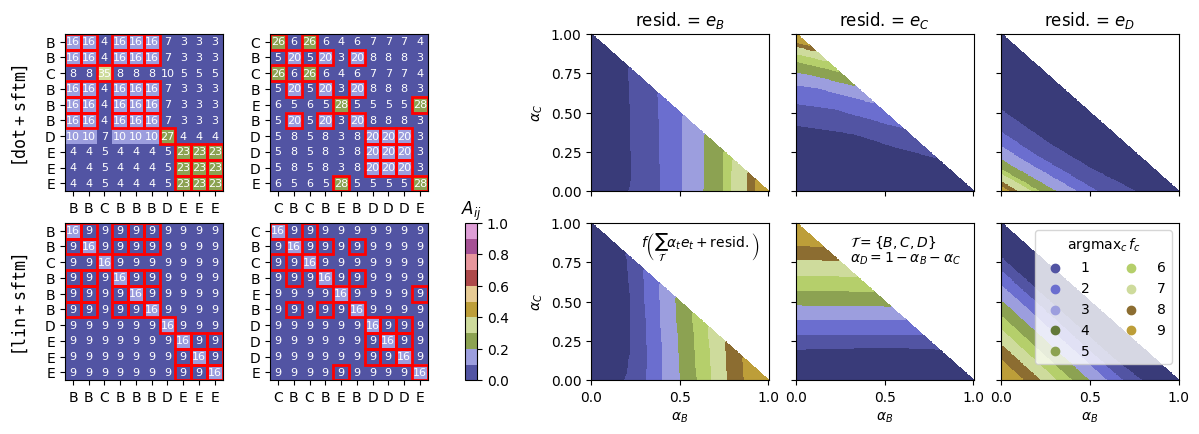}\vspace{-1em}%
    \caption{ \textit{Inventory-based counting with \Mdotsftm{} $(p=32,d=32)$ and \Mlinearsftm{} $(p=64,d=64)$).} We have $T=32$, $L=10$. The models achieves 99.47\% and 100\% accuracy respectively. \textit{(Left columns)} The attention matrix for two different sequences.  It differentiates between same ({\color{red}red} squares) and different tokens for \Mdotsftm{} but is invariant to the semantics for \Mlinear{}. For \Mdotsftm{}, any counting direction that could emerge in token space is not informative due to the softmax normalization, so $p\geq T$ is required (see Fig.~\ref{fig:learning-regimes}).
    \textit{(Right columns)} We test the feed-forward layer $f$ in isolation, by feeding it with an artificial mix of learned embeddings for the three tokens $B,C$ and $D$. We plot the class predicted by $f$ for inputs constructed as $\bar x' = \alpha_B e_B + \alpha_C e_C + \alpha_D e_D + R$, where $1=\alpha_B +\alpha_C + \alpha_D$ and we change the residual $R \in \{e_B,e_C,e_D\}$ from left to right, for each of the models \Mdotsftm{} and \Mlinearsftm{}.
    The prediction strongly depends on the coefficient $\alpha_t$ associated with the token $t$ present in the residual connection and only weakly on the others. The non-linear scaling of the decision boundaries is due to the softmax activation function.}\label{fig:softmax-introspection}%
\end{figure*}%
\subsubsection{Inventory-based counting: Memorization in the feed-forward layer}
When the feed-forward hidden layer has one neuron for each distinct token available in the alphabet, it can detect as many directions. This allows the feed-forward layer to extract the information of any token direction separately and thereby implement a custom comparison operation that works for all of the tokens in the alphabet. While this is less parameter efficient and requires memorizing the complete alphabet, it enables the model to solve the task.
\begin{proposition}[IC with memorization in the feed-forward layer]\label{prop:IC-with-lin}
    For \Mlinear{} and \Mlinearsftm{} and a given $L,T>2$ there exists a configuration of weights which solves the histogram task for $p\geq T$ and $d \geq T$.
\end{proposition}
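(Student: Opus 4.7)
The plan is to prove the proposition by explicit construction, exhibiting weights that realize the inventory-based counting (IC) strategy for both \Mlinear{} and \Mlinearsftm{}. Since $d \geq T$, I choose mutually orthogonal unit-norm token embeddings $\{e_t\}_{t \in \mathcal{T}}$. Since $p \geq T$, the feed-forward hidden layer has enough neurons to dedicate one to each alphabet symbol. The core idea is: arrange the mixing layer so its output is the uniform average of all sequence embeddings, and then use the additive residual $e_{x_\ell}$ as a ``selector'' that lets exactly one hidden neuron in the feed-forward block read off the count of $x_\ell$.

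First I would specify the mixing layer. For \Mlinear{} take the learnable matrix $\Am = \tfrac{1}{L}\1\1^\top$, producing $[\A\Xembd]_\ell = \tfrac{1}{L}\sum_{m=1}^L e_{x_m}$. For \Mlinearsftm{} take any constant matrix (e.g.\ $\Am = 0$), so that the row-wise softmax yields the same uniform weights $1/L$. In both cases the mixed representation is $\xembd_\ell' = e_{x_\ell} + \tfrac{1}{L}\sum_{m=1}^L e_{x_m}$, and orthonormality gives $\langle e_t, \xembd_\ell' \rangle = \mathbb{1}[t=x_\ell] + n_t/L$, where $n_t$ denotes the number of occurrences of $t$ in $\mathbf x$.

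Next I would specify the feed-forward block to gate on this selector. Stack the $T$ token embeddings as the first $T$ columns of $\W_1$ (any remaining columns set to zero), and take a constant bias $b_1 = -\1$. This turns $\mathrm{ReLU}(\xembd_\ell'\W_1+b_1)$ into a one-hot vector supported at coordinate $x_\ell$ with value $\hist{\ell}{\mathbf x}/L$: for $t\neq x_\ell$ the count obeys $n_t\leq L-1$, so $n_t/L<1$ and the ReLU kills those entries, while the coordinate $t=x_\ell$ receives the extra $+1$ from the residual. A quadratic readout $\W_2[t,c] = 2cL$, $b_2[c] = -c^2$ then makes the class logits equal $-(c - \hist{\ell}{\mathbf x})^2 + \mathrm{const}$, whose argmax over $c$ is exactly $\hist{\ell}{\mathbf x}$.

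The load-bearing step is the strict separation $n_t/L < 1$ for $t \neq x_\ell$. It relies on the combinatorial fact that position $\ell$ being occupied by $x_\ell$ itself removes one slot from the supply of other tokens, which is precisely what makes a single uniform threshold $-\1$ discriminate ``selected'' from ``unselected'' hidden coordinates. This is also what forces $p \geq T$: each alphabet symbol needs its own dedicated gate, since the mixing layer is input-independent and leaves the burden of comparison entirely to the feed-forward block, in contrast with the relation-based constructions of Section~\ref{sec::d>T} where attention could concentrate all count information onto a single direction and make $p = 1$ sufficient.
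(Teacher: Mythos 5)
Your construction is essentially the paper's own proof: identical orthonormal embeddings, the same uniform $1/L$ mixing matrix (with the softmax case reduced to it via a constant pre-softmax matrix), and the same gating argument with $W_1=E$, $b_1=-1$ so that only the neuron for $t=x_\ell$ survives the ReLU with value $\hist{\ell}{\mathbf x}/L$. The only cosmetic difference is the readout, where you use the quadratic-score weights giving logits $2ck-c^2$ instead of the paper's piecewise-linear recursion for mapping a scalar to a one-hot class; both are valid.
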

We describe examples of such constructions in App.~\ref{app::sec:inventory-based-counting:lin} and~\ref{app::sec:inventory-based-counting:linsftm}.
Again, several solutions exist due to symmetries, and in the following we give an intuition for one of them.\\
In the linear mixing layer $\Alin$ we set a constant value $a=1/L$ so that the result of the mixing is simply a position-independent linear combination of the input.
The count $hist_{{\mathbf x}}({\ell})$ can be extracted after the residual connection where we add $\xembd_\ell' = \xembd_\ell + e_{x_\ell}$. By setting the columns of the matrix $(W_1)_t = e_t$ we can extract the count information up to the factor $1/a$
\begin{align*}
 \hist{\ell}{\mathbf x} &= \frac{1}{a}\sum_{t\in \mathcal T}\text{ReLU}\left(\left\langle \xembd_\ell', (W_1)_t\right\rangle-1\right) \\&= \frac{1}{a}\sum_{t\in \mathcal T}\text{ReLU}\left(\langle \xembd_\ell', e_t\rangle-1\right) = \frac{1}{a} \langle \xembd_\ell, e_{x_\ell}\rangle
\end{align*}
Note that, due to the $-1$ bias term, only the hidden neuron for token $(W_1)_t = e_t=x_\ell$ that occurs in the residual connection has a non-zero activation.
The output layer $W_2$ can then be designed to activate the correct output vector corresponding to the count $\hist{\ell}{\mathbf x}$ (see App.~\ref{app::value-to-vector}). Since $a\in[0,1]$ and $\sum_{m=1}^La_{\ell m}=1$ the same procedure can be implemented by a matrix which is passed through the softmax operator for \Mlinearsftm{}. 
In practice, in this construction the feed-forward module is correlated with the complete alphabet, acting as an inventory, or look-up table.

In Fig.~\ref{fig:softmax-introspection}, we inspect the attention matrix $\Alin$ and the feature transformation $f$ which is \textit{learned} for \Mlinearsftm{ } in the regime where $p\sim T \sim d$. The mixing has an off-diagonal of $\sim0.09$ and a diagonal of $\sim0.16$. Feeding the feature transformation $\f$ with a weighted combination of $3$ tokens, $B,C,D$, we observe that the final prediction of the network depends mainly on the coefficient $\alpha_t$ corresponding to the token embedding fed through the residual connection. 
This behavior is reflected in Fig.~\ref{fig:softmax-introspection}  (right) and suggests that the feature transformation must have encoded the 
information of the token embedding in its weights, hence requiring at least $p=T$ hidden neurons. 

\paragraph{Superpositioned and selective implementations.}
Some of the models capabilities include one another. For example, the models that can implement relation-based counting for $p=1$ can also implement the solutions for inventory-based counting for $p\geq T$.
It is unclear, whether the memory-intensive solution is preferred when the memory is available, or if the efficient solution is learned nonetheless.
Curiously, in Fig.~\ref{fig:learning-regimes}, we observe that the model \Mdot{} (which is capable of RC) witnesses a very slight decrease in maximal learned performance from $100\%$ accuracy to $99\%$ despite its capacity being \textit{increased} to $p=T$ when inventory-based counting can in principle be implemented. 
In App.~\ref{app::singular-values} we investigate the singular value decomposition of $W_1$, for learned models with $\geq99\%$ accuracy and $p,d\geq T$. We find that the largest $T=32$ singular values are larger than the surplus singular values when $p>T$ for models that can implement only IC. 
This behavior is less pronounced for models that can implement RC, where the largest singular value is often relatively much larger than the following $T=32$, but still show a small dip after the $T=32$ singular values.
Understanding which algorithm is implemented in this regime, or if it is a superposition of the two, thus requires further investigation.
\looseness=-1
\subsection{\texorpdfstring{$d < T$}{d < T}: Non-orthogonal embeddings and the discrete nature of counting}\label{sec::d<T}

The scenario where $d<T$ fundamentally differs from the one explored in Section~\ref{sec::d>T} because the embeddings for different tokens can no longer be mutually orthogonal. Some token pairs then have a non-zero overlap due to their linear dependence, causing the mixing of tokens to entangle count information across different directions in the embedding space.\\ This phenomenon is illustrated for \Mdot{} in Fig.~\ref{fig:decreasing-d}, where learned models with smaller $d$ tend to overcount items in the input, and observe a less spread distribution of overlaps. Nevertheless in Fig.~\ref{fig:learning-regimes} we observe a number of results that empirically show almost perfect accuracy solutions with $d<T$ both for models with RC or IC.
Indeed, the discrete nature of the histogram task, i.e. the fact that every token can only be mapped to $L$ distinct counts, makes the prediction inherently more robust to the effect of noise stemming from entangled embeddings. This concept is illustrated in Fig.~\ref{fig:step-function} in App.~\ref{app::value-to-vector} for the \Mdotsftm{} model. As long as the value of the logits in the final output layer falls within the margin between two counts the model still solves the task with perfect accuracy.
The relative size of this margin decreases when $L$ is increased, making the task harder when more classes need to be distinguished.

In the following, we link concepts on optimally placing decision boundaries for noise robustness to a characterization of this entanglement noise, measured by the mutual coherence of the token embedding set (i.e., the maximum absolute overlap between pairs of distinct embeddings). The mutual coherence of a set of $T$ vectors of dimension $d$ is lower bounded by the Welch bound \citep{Welch1974LowerBO}. This gives a means to understand the size of $d$ a given task with $T,L$ requires at least.

\begin{proposition}[Robustness via bounded mutual coherence]\label{prop:mutualcoherence}
Given $L \geq 5,T\geq 2$ and assuming that the Welch bound is attained for a given $T,d$, there exists a construction that solves the histogram task with
\begin{itemize}
        \item [] {(\Mlinear{}, \Mlinearsftm{};  $p=T$): $ \left\lceil\frac{T(2L-3)^2}{T-1+(2L-3)^2}\right\rceil \leq d$,}
         \item [] {(\Mdot{}, \MBOS{}; $p=1$): $ \left\lceil\frac{T(2L-3)^2}{T-1+(2L-3)^2}\right\rceil\, + 1 \leq d$,}
        \item []{(\Mdot{}, \MBOS{}; $p=T$): $\left\lceil\frac{T(L-1)}{T-1+(L-1)}\right\rceil \leq d$.}
\end{itemize}
\end{proposition}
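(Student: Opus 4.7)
The plan is to perturb the three constructions from Propositions~\ref{prop:RC-with-BOS}, \ref{prop:RC-dot}, and \ref{prop:IC-with-lin} by allowing the $T$ unit-norm token embeddings to have mutual coherence $\mu = \max_{s \neq t}|\langle e_s, e_t\rangle|>0$, identify for each architecture the largest admissible coherence $\mu_{\max}$ under which the construction still outputs the correct count on every input sequence, and then invert the Welch bound $\mu \geq \sqrt{(T-d)/(d(T-1))}$, which is attained by hypothesis, to translate $\mu_{\max}$ into the stated lower bound on $d$. All construction parameters (mixing constant, feed-forward biases, output matrix) are left exactly as in the orthogonal case, and the overlaps enter only as additive noise.

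For the $(\Mlinear{}/\Mlinearsftm{}, p=T)$ row with $a=1/L$, I decompose each hidden pre-activation $\langle \bar{x}'_\ell, e_t\rangle - 1$ into an ideal value and a $\mu$-dependent perturbation. The signal neuron $t = x_\ell$ produces $c_{x_\ell}/L + \delta_s$ with $|\delta_s| \leq (L-c_{x_\ell})\mu/L$; each spurious neuron $t \neq x_\ell$ has pre-activation bounded above by $c_t/L - 1 + \mu(2L-c_t)/L$. Imposing simultaneously that every spurious ReLU stays clamped at zero for every admissible $c_t$ and that the scalar output $c_{x_\ell}+L\delta_s$ lies within the discrete margin between consecutive integer counts (so the argmax layer recovers $c_{x_\ell}$) pins the critical tolerance to $\mu \leq 1/(2L-3)$; the binding configuration is a sequence with $c_{x_\ell}=1$ and some $c_t=L-1$ paired with worst-case overlap signs.

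For $(\Mdot{}/\MBOS{}, p=1)$ I use the relation-based constructions, in which the count of $x_\ell$ is read off from the inner product with a single dedicated direction ($\eBOS$ or $e_{\mathrm{cnt}}$). To preserve the clean readout under overlap noise I force this direction to be orthogonal to every $e_t$, which reserves one dimension and accounts for the ``$+1$'' in the bound; the remaining noise propagation is structurally identical to the linear-mixing case and yields the same $\mu \leq 1/(2L-3)$. For $(\Mdot{}/\MBOS{}, p=T)$ the attention weights $a_{\ell m}\propto\langle e_{x_\ell}, e_{x_m}\rangle$ are themselves bilinear in the embeddings, so cross-token interactions enter the mixed token \emph{quadratically} in $\mu$; the dominant noise on the signal neuron is now $O\bigl((L-c_{x_\ell})\mu^2\bigr)$ and the tolerance relaxes to $\mu \leq 1/\sqrt{L-1}$.

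The final step is algebraic: given $\mu_{\max}$, the attained Welch bound $\sqrt{(T-d)/(d(T-1))} \leq \mu_{\max}$ rearranges to $d \geq T/(1+(T-1)\mu_{\max}^2)$. Substituting $\mu_{\max}=1/(2L-3)$ reproduces $d \geq T(2L-3)^2/(T-1+(2L-3)^2)$ for the first two rows of the proposition, while $\mu_{\max}=1/\sqrt{L-1}$ reproduces $d \geq T(L-1)/(T-1+(L-1))$ for the third; integrality supplies the ceilings and the orthogonal counting direction supplies the ``$+1$''. The main obstacle will be the nonlinear ReLU bookkeeping: identifying the worst-case joint configuration of sequence counts and overlap signs that simultaneously maximises the signal perturbation and pushes spurious neurons towards activation, and, for the $p=T$ dot-product case, verifying that the first-order-in-$\mu$ contributions to the spurious pre-activations cancel by the symmetry of the query--key product, so that the $O(\mu^2)$ remainder truly controls the tolerance.
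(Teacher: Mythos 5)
Your plan follows the paper's proof essentially step for step: perturb the orthogonal-embedding constructions, bound the noise terms by the mutual coherence $\mathcal M$, derive the critical tolerances $1/(2L-3)$ (linear in $\mathcal M$, for \Mlinear{} with $p=T$ and for the relation-based $p=1$ readout) and $1/\sqrt{L-1}$ (quadratic, because the overlap enters once through the attention score and once through $W_1$ in the $p=T$ dot-product case), and then invert the Welch bound. The binding configurations you identify ($c_{x_\ell}=1$ for the margin condition, $c_t=L-1$ for the spurious-neuron clamping) are exactly the ones the paper uses.

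One step would fail as literally written. For (\Mdot{}, \MBOS{}; $p=1$) you say you force the counting direction to be \emph{orthogonal to every $e_t$}. Since the value matrix is the identity, the mixed token is a linear combination of the $e_{x_m}$ plus the residual, so a readout direction orthogonal to all token embeddings gives $\langle e_{\mathrm{cnt}},\bar x_\ell'\rangle \equiv 0$ and carries no count information. What the paper does instead --- and what the ``$+1$'' actually buys --- is to place the low-coherence unit vectors $v_t$ in $\R^{d-1}$ and append a \emph{common} component $\alpha>0$ in the reserved $d$-th coordinate, with $e_{\mathrm{cnt}}=[0,\dots,0,1/\alpha]$; then $\langle e_t,e_{\mathrm{cnt}}\rangle=1$ for every $t$, the readout aggregates the attention scores, and the extra $\alpha^2$ contributions to the pairwise overlaps are absorbed into the tolerance, which becomes $\mathcal M < \tfrac{1}{2L-3}-\tfrac{2L-4}{2L-3}\alpha^2$ and recovers your stated bound only in the limit $\alpha\to 0$. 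Relatedly, in the $p=T$ dot-product case the first-order-in-$\mathcal M$ terms in the spurious pre-activations do not cancel by any symmetry; they are simply dominated by the $-1$ bias (the paper bounds them by $2\mathcal M-1<0$), while the quadratic terms govern only the \emph{signal} neuron's margin. With these two corrections your argument coincides with the paper's.
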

We provide additional background and the proofs in App.~\ref{app::mutcoherence}. The idea is to use constructions analogous to the RC and IC with orthogonal embeddings, while keeping track on how the errors of non-zero overlaps between pairs of different embeddings propagate through the model.
For a given $L$ and $T$ this provides an upper bound on the maximal mutual coherence that is tolerated for a perfect solution. This can be connected to the dimensionality $d$ via the Welch bound. Evaluating the bounds for the setting in Fig.~\ref{fig:learning-regimes}, 
we obtain, in the order of the above list, $d\geq 29,30,7$. 
Generally it is hard to generate matrices that attain the Welch bound and manually we did not succeed to find them for $d=29,30$. However we can indeed create an explicit construction a for \Mdot{} and $p=T$ which attains $d=12$, as provided in the supplementary code and in correspondence with Fig.~\ref{fig:learning-regimes}. While this bound does not reach the $d$ as indicated by the Welch bound, the mutual coherence of the embedding matrix we use is close to the maximally allowed value of $\mathcal M =0.299 < 1/3$.

The previous results apply specifically to models without the softmax operator in the token mixing step -- models with this non-linearity can be more robust and attain even smaller $d$, as clearly visible in Fig.~\ref{fig:learning-regimes}.
The idea is that a softmax function with a high enough inverse temperature can non-linearly scale down the attention scores for different token pairs relative to those of the same tokens. Thereby, the noise introduced in the dot-product layer through pairs of different embeddings becomes  \textit{arbitrarily} close to zero after applying the softmax.
\begin{proposition}[Robustness via softmax error-reduction]\label{prop:softmax}
Given $T,L>2$, there exist weight configurations that solve the histogram task for the parameter combinations (\MBOSsftm{}{}; $p=1$) and (\Mdotsftm{}; $p=T$) with $ \lceil \log_2(T+1) \rceil + 2 \leq d$.
\end{proposition}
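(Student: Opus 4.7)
The plan is to use \emph{binary} embeddings $b_t \in \{\pm 1\}^{d_0}$ for $d_0 := \lceil\log_2(T+1)\rceil$ distinct codewords (one per token, with one spare reserved for BOS), padded with two auxiliary coordinates to reach total dimension $d = d_0 + 2$. The key structural fact is that distinct binary codewords satisfy $\langle b_t, b_s\rangle \leq d_0 - 2$, so scaling $W_Q, W_K$ by $\sqrt{\beta}$ yields a pre-softmax gap of at least $2\beta$ between matching and non-matching attention scores. This gap is amplified exponentially by the softmax, driving non-matching attention weights below any prescribed threshold $\varepsilon(\beta)$ as $\beta$ grows---precisely the ``softmax error-reduction'' announced in the statement.

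For \MBOSsftm{} with $p=1$, I would place tokens as $e_t = (b_t, 0, 0)$ and BOS as $\eBOS = (\mathbf 0, 0, 1)$, so that $\eBOS$ is orthogonal to every $e_t$. I would then design $W_Q, W_K$ so that non-BOS pairs receive dot-product score $\beta \langle b_\ell, b_m\rangle$, while every query-to-BOS score is a controllable constant $\gamma$, arranged by routing the BOS indicator coordinate through a rank-one component of $W_K$ whose image has fixed inner product $\gamma$ with $W_Q e_t$ for every $t$. Sending $\beta\to\infty$, non-matching softmax weights vanish and the BOS receives weight $e^{\gamma}/(e^{\gamma}+\hist{\ell}{\mathbf x})$ up to the error $\varepsilon(\beta)$. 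Because $\eBOS \perp e_t$ for all $t$, the projection $\langle\xembd_\ell', \eBOS\rangle = e^{\gamma}/(e^{\gamma}+\hist{\ell}{\mathbf x})$ takes $L$ distinct monotone values over $\hist{\ell}{\mathbf x} \in\{1,\dots,L\}$; one ReLU neuron aligned with $\eBOS$ followed by the value-to-vector output layer from App.~\ref{app::value-to-vector} converts this scalar into the correct class.

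For \Mdotsftm{} with $p=T$, the count must enter through the inventory mechanism of the feed-forward layer, as in Proposition~\ref{prop:IC-with-lin}. I would again keep the binary codes in the first $d_0$ coordinates and design $W_Q, W_K$ so the softmax concentrates on matching positions with pre-softmax gap $2\beta$. One of the auxiliary coordinates is used as a ``self-sink'' key that every query scores against at a fixed level $\gamma$, mimicking the BOS mechanism of the previous case and producing $\xembd_\ell' = (1+\alpha(\hist{\ell}{\mathbf x}))\,e_{x_\ell} + \mathrm{noise}(\beta)$ with $\alpha$ monotone in the count via the softmax normaliser. The feed-forward layer then uses $T$ hidden neurons with weight vectors $(W_1)_t = e_t$ and a threshold placed just above the $(1-2/d_0)$ overlap bound so that only the neuron $t=x_\ell$ fires, with firing strength a monotone function of $\hist{\ell}{\mathbf x}$; a standard value-to-vector output layer reads off the class.

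The main obstacle is the \Mdotsftm{} case: one must simultaneously (i) preserve a monotone $\hist{\ell}{\mathbf x}$-dependence in the mixed token despite the absence of a literal BOS position, and (ii) show that the $(1-2/d_0)$ overlaps between distinct binary codewords do not corrupt the $T$-way inventory read-out after softmax noise suppression. Handling this cleanly requires a careful $\varepsilon$-argument propagating the softmax error through the residual connection and the thresholded ReLU inventory, while leveraging the discrete $L$-class output margin (cf.\ the discussion preceding Proposition~\ref{prop:mutualcoherence}) to absorb the remaining arbitrarily small but non-zero error.
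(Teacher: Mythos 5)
Your construction for \MBOSsftm{} with $p=1$ is essentially the paper's: binary codewords in $\lceil\log_2(T+1)\rceil$ coordinates give a uniform multiplicative/additive gap between same-token and different-token scores, the softmax inverse temperature amplifies that gap to suppress the cross terms, and a single neuron reads the BOS attention mass, which is monotone in $\hist{\ell}{\mathbf x}$. One bookkeeping point you must repair: with matching score $\beta d_0$ and a BOS score equal to a \emph{fixed} constant $\gamma$, the softmax weight on BOS is $e^{\gamma}/(e^{\gamma}+\hist{\ell}{\mathbf x}\,e^{\beta d_0}+\cdots)\to 0$, not $e^{\gamma}/(e^{\gamma}+\hist{\ell}{\mathbf x})$; the BOS score has to track the matching score, i.e.\ $\gamma=\beta d_0+\gamma_0$ (the paper achieves the analogous offset by giving tokens self-overlap $1+\alpha^2$ and BOS-overlap exactly $1$). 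With that fix your limiting argument for the BOS case is sound and in fact slightly cleaner than the paper's finite-$\kappa$ root-finding, since your count signal survives the $\beta\to\infty$ limit.

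The genuine gap is in the \Mdotsftm{}, $p=T$ case, and it is not merely a matter of a ``careful $\varepsilon$-argument.'' First, the ``self-sink key'' cannot work as described: the softmax in \Mdotsftm{} ranges only over the $L$ actual positions, so a coordinate that every query scores against at level $\gamma$ adds $\gamma$ to every entry of the attention row and cancels by the shift-invariance of the softmax --- there is no phantom position to absorb it, which is exactly why \MBOS{} needs a literal extra token. Second, and more fundamentally, in \Mdotsftm{} the count information lives in the softmax \emph{normalizer}: the mass on matching positions is $k e^{\kappa s_{=}}/\bigl(k e^{\kappa s_{=}}+(L-k)e^{\kappa s_{\neq}}\bigr)$, which tends to $1$ for every $k\ge 1$ as $\kappa\to\infty$. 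So the limit you invoke to kill the overlap noise also kills the signal, and $\langle \xembd'_\ell, e_{x_\ell}\rangle$ becomes count-independent. The proof must instead exhibit a \emph{finite} inverse temperature at which the residual non-matching mass is small enough that overlaps of size up to $1-\epsilon$ cannot blur adjacent counts, yet large enough that the normalizer still separates $k$ from $k+1$; the paper certifies this window by showing the margin $\mathrm{dist}(\kappa)$ has negative derivative at $\kappa=0$ and tends to $0^+$ as $\kappa\to\infty$, hence has a sign-definite region at finite $\kappa$. Your proposal does not contain this step, and without it the \Mdotsftm{} half of the proposition is not established.
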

Put simply, this construction requires 
that there are token embeddings for $t,s=1,\ldots,T$ and $s \neq t$ with $\epsilon >0$ such that $\langle e_t, e_t \rangle = 1$ and $\langle e_t, e_s \rangle < 1 - \epsilon$.
This is fulfilled when every token is the binary encoding of its value, modulo minor modifications due to the RC mechanism for \MBOSsftm{}.
Setting the softmax temperature high enough as a function of $L$ allows for the contributions from non-equal tokens to be decreased relative to the ones of same tokens.
Evaluating this function for Fig.~\ref{fig:learning-regimes}, we obtain $d=7$, which closely corresponds to the most parameter efficient solutions of the histogram task that we observe.
As $L$ grows, we require stronger concentration from the softmax by adjusting its temperature. Since real-world networks execute finite computations, computational instabilities or collapses might occur.
It is therefore not clear that this correspondence will hold for all values of $L$. 

In App.~\ref{sec:appendixdetail} we show that this bound can be even further improved for \MBOSsftm{} to a \textit{constant} $d=4$, but at the cost of increasing the temperature further as a function of $T$, in addition to $L$, requiring yet more accurate computations. While in practice we are able to hand construct and code a solution for the $L,T$ under consideration (available in the supplementary code), these models are not learned. However the very large numerical precision required  might lead to learning difficult  learning dynamics and be the reason why we do not observe any learned solutions of the histogram task in this regime.

\begin{figure}[t]
    \centering
    \includegraphics[width=1.0\linewidth]{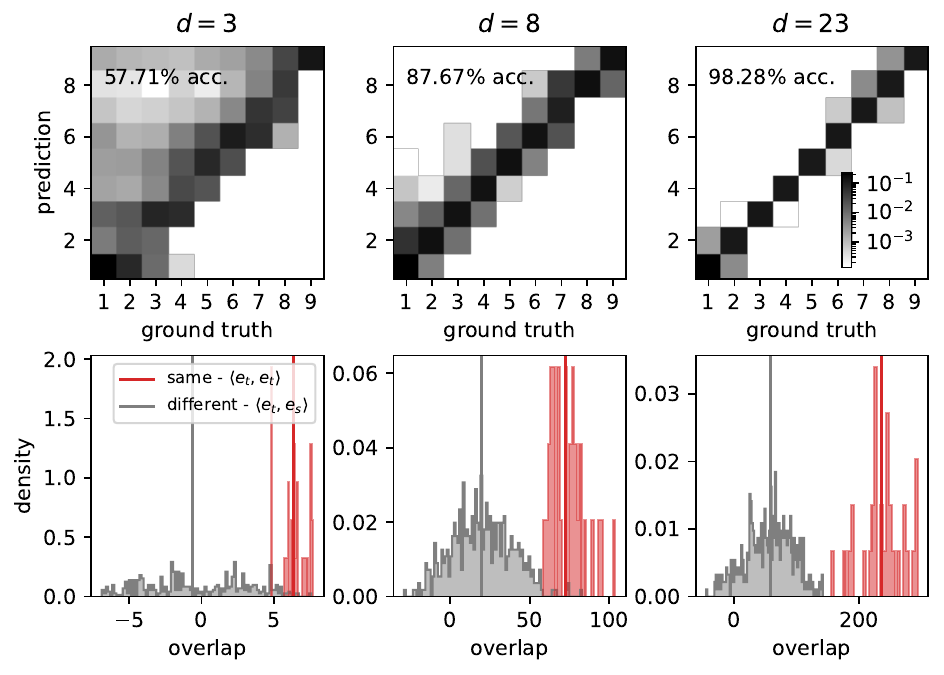}\vspace{-1.5em}
    \caption{ \textit{Introspecting the regime with Entangled Embeddings with \Mdot{} ($T=32,L=10,p=32$).} We show trained instances of \Mdot{} with varying the model dimension $d$. \textit{(Top)} The confusion matrix of ground truth and predicted counts. \textit{(Bottom)} The overlap distribution between same and different token embeddings.}\vspace{-1em}
    \label{fig:decreasing-d}%
\end{figure}%
\subsection{Extensive $T,L$ ablations and two layer architectures}
Appendix~\ref{app::extra-phase-diagrams} provides a number of ablations for $T=15,64$ and $L=5,15,30$, for different architectures and hidden layer size $p$ and embedding size $d$. Our predictions from the theory are largely consistent with the performance for different alphabet sizes $T$ and small $L$, similarly to $T=32$ and $L=10$. However, for very large $L=30$ it seems that the keeping the training budget constant lead to a weaker performance in the feasible regimes.
\\
In the same section, we show that for learned \textit{2-layer} attention blocks the general phenomenology for the single layer models from Fig.~\ref{fig:learning-regimes} is very coherent. Only in the critical regimes along the transition from not possible to possible transitions, there are some larger differences in average performance.
This indicates that 2-layer networks do not necessarily operate differently in what they learned.

\section{Related Work}\label{sec:relatedworks}
 \paragraph{Counting and Interpretability.} The emergence of algorithmic capabilities in transformers~\citep{olsson2022incontext,power2022grokking} has led to numerous investigations aimed at reverse-engineering trained models into human-understandable mechanisms~\citep{zhong2023clock,nanda2023progress,quirke2024understanding}, including a variety of counting scenarios~\citep{gould2023successor,arc,ouellette2023counting,cui2024phasetransitionpositionalsemantic,golkar2024contextualcountingmechanisticstudy}. In our work, we consider the histogram task introduced via the RASP(-L) programming language~\citep{weiss2021thinking,generalization-on-the-unseen}. While RASP predicts that the histogram single layer transformers with one head require a BOS token~\citep{scratchpads}, we find that this is not necessary by providing constructions. While many works in interpetability focus on causal interventions \citep{genderbias,meng2023locating} to understand the computational mechanisms of models or assign relevance scores to their components \citep{nostalgebraistInterpretingGPTLogit2020,elhage2021mathematical}, we focus on the hyperparameter scaling of several distinct models in relation to their performance and explicit constructions of different algorithms, similar to e.g. \cite{zhong2023clock,quirke2024understanding,nichani2024understandingfactualrecalltransformers}.  We give precise theoretical conditions on the model configurations that lead to perfect explicit constructions and link them to introspection on learned models.
In concurrent but independent work to ours, \citet{yehudai2024transformers} explore a very similar version of the histogram task, and find, as we do, that the numerical precision limits the counting size. While they do no examine different architectures, they find rigorous upper bounds.
 
\paragraph{Memorization and Feed-forward Layers.} The role of feed-forward layers as memorization modules is investigated in the context of factual recall for language models~\citep{geva-etal-2021-transformer,meng2023locating,chughtai2024summing}. \citet{henighan2023superposition} study a double decent phenomenon where the purpose of the feed-forward layer transitions from storing data points to discovering generalizing features as a function of increasing training data diversity~\citep{raventos2023pretraining}. In the histogram task, we observe a similar phenomenon as a function of the architecture: the feed-forward layer acts either as a look-up table or a feature detector for for a counting subspace.
\paragraph{Aligning Algorithm and Architecture.} While theoretical work has defined the computational capacity of several (autoregressive) neural networks~\citep{weiss2021thinking,universality,deletang2023neural,liu2023transformers}, hallucinations and failure modes on seemingly trivial tasks in real-world transformers are the rule rather than an exception. \citet{faithandfate} postulate that this may be due to a misalignment between the computational graph of a model and the task itself. In this work, we show that subtle differences in components such as the mixing type and layer width play a crucial role in terms of algorithmic alignment. Previous work found evidence of superimposed computational graphs in a single model~\citep{elhage2022toy}; our models shed light on how such entanglement functions in detail for non-orthogonal embeddings.

\section{Discussion \& Conclusion}
\label{sec:conc}
\paragraph{Limitations.} Similar to other works in mechanistic interpretability \citep{zhong2023clock}, we focus on 1-layer transformers as a simplified model for modern transformers. Our models are not autoregressive and do not account for the impact of causal masks or positional encodings. While more complex models could lead to more intricate interdependencies between the components, potentially limiting the applicability of our findings to such architectures, it seems plausible that similar vector arithmetic could emerge in subspaces of large transformers~\citep{gould2023successor,engels2024languagemodelfeatureslinear}. Given its specificity, it is unclear if and how similar memory-architecture phenomena would emerge for different simple tasks (e.g. sorting or lookup).
However, the learning regimes seem to transfer to two layer models, and even large auto-regressive language models fail to reliably perform well on this task, hence motivating the study of algorithmic emergence in smaller models.
\paragraph{Summary.} We study how different components of simple transformer models contribute to the emergence of different solutions to the histogram task. The feasibility of solving the histogram task including the comparison and aggregation operations depends on the mixing mechanism, its interaction with the feed-forward transformation, and the softmax function in the attention. Relation-based counting, uses a dot product mixing mechanism and low-capacity feed-forward transformation, and inventory-based counting, which memorizes token embeddings in feed-forward weights, requires more parameters. We characterize feasibility regimes in the phase space of embedding dimension $d$ and feed-forward dimension $p$, confirming that models converge to these mechanisms. Some regimes allow both strategies, and experiments show features of superimposed mechanisms. When $d < T$, tokens cannot form an orthogonal basis for linear projection. Models exhibit varying robustness to noise from non-orthogonality. The softmax activation minimizes token similarity in attention, reducing non-orthogonality effects—relevant in real-world models where $T$ often exceeds model dimensions.
\paragraph{Future Directions.} Examples of hallucinations and failures in LLMs are as numerous as their successes. Though limited to one task, our analysis highlights how subtle architectural changes can drastically impact predictive power. We expect that similar mechanistic investigations at or close to the regimes where models start failing will be extremely useful to understand how and why models fail in sometimes puzzling manners.
\section*{Acknowledgements}
Luca Biggio acknowledges support from the AI for Science postdoctoral fellowship at EPFL.

\section*{Impact Statement}
This paper presents work whose goal is to advance the field of 
Machine Learning. There are many potential societal consequences 
of our work, none which we feel must be specifically highlighted here.

\newpage
\appendix
\onecolumn

\newcommand{\myref}[2]{\item[\ref{#1}]\hyperref[#1]{~#2}}

\section*{Appendices}
\begin{itemize}
    \myref{app::llms}{Counting with large language models}\myref{app::implementations}{Explicit Constructions for Orthogonal Embeddings \texorpdfstring{$d=T$}{d=T}}
    \begin{itemize}
        \myref{app::overview}{Overview}
        \myref{app::sec:rel-based}{Relation-based counting}
        \myref{app::sec:inventory-based-counting}{Inventory-based counting}
        \myref{app::value-to-vector}{Mapping a scalar to a categorical one-hot encoding}
    \end{itemize}
    \myref{app::implementations_d_lessthan_T}{Explicit Constructions for Linearly Dependent Embeddings \texorpdfstring{$d<T$}{d<T}}
    \begin{itemize}
        \myref{app::overview2}{Overview}
        \myref{app::mutcoherence}{Explicit construction for bounded mutual coherence}
        \myref{app::softmaxexplicit}{Explicit Construction with binary representations and softmax}
    \end{itemize}
    \myref{app::data-gen}{Data Generation}
    \myref{app::extra-phase-diagrams}{Additional Experiments}
    \begin{itemize}
        
\myref{app::sec:add-measures}{Best Accuracy}
\myref{app::sec:variability}{Variability}
\myref{app::sec:random-embd}{Model with Random but Fixed Embeddings}
\myref{app::sec:BOSmixing}{BOS mixing token}
\myref{app::singular-values}{Singular Value Decomposition of \( W_1 \)}
\myref{app::sec:varytokens}{Varying the number of tokens in the alphabet}
\myref{app::sec:varylength}{Varying the length of the sequence}
\myref{app::sec:2layer}{Models with two layers}
    \end{itemize}
\end{itemize}

\newpage
\section{Counting with large language models}\label{app::llms}

We prompt language models with 7 and 8 billion parameters to count the inputs of a list. We use the torch implementations of \texttt{mistral-7B-v0.1} \cite{jiang2023mistral7b} and \texttt{Llama-3.1-8B}~\cite{grattafiori2024llama3herdmodels} with \texttt{float16} respectively and prompt in the english language.
We prompt the model to count items in different lists of ten items which stem from groups of 12 items (letters, months, animals).
The groups of items are
\begin{itemize}
    \item \texttt{letters: ['A', 'B', 'C', 'D', 'E', 'F', 'G', 'H', 'I', 'J', 'K', 'L']}
    \item \texttt{animals: ['dog', 'cat', 'donkey', 'horse', 'elephant', 'giraffe', 'lion', 'tiger', 'bear', 'wolf', 'zebra', 'monkey']}
    \item\texttt{months: ['january', 'february', 'march', 'april', 'may', 'june', 'july', 'august', 'september', 'october', 'november', 'december']}
\end{itemize}
And the four prompt types are
\begin{enumerate}
    \item (compare, position)\\\texttt{[list] By comparing all ten items in the previous list to one another, we find that the [i'th] item of the list appears [prompt]},
    \item (compare, semantic)\\\texttt{[list] By comparing all ten items in the previous list to one another, we find that [item] appears [prompt]},
    \item (summarize, position)\\\texttt{[list] By summarizing the occurrences of [category\_name] in the previous list of ten items, we find that the [i'th] item of the list appears [prompt]},
    \item (summarize, semantic)\\\texttt{[list] By summarizing the occurrences of [category\_name] in the previous list of ten items, we find that [item] appears [prompt]}.
\end{enumerate}
When we generate data we sample the ten items from the list either uniformly at random with replacement, or as we do for the toy model experiments as specified in appendix~\ref{app::data-gen}.

We then sample 100 lists for each model and generate all possible queries for different items and positions. We also change the category type and sampling strategy and report the confusion matrices of the ground truth over the predictions in Fig.~\ref{app::fig:mistral}.

We observe that the performance is not very good. Querying for a given item semantically rather than its position consistently performs better. For the positional queries the models mainly predict the count 3. Depending on the type of object that is counted, counting more than 5 is correct only rarely for both models.
The type of item that is counted also impacts performance.
Generally Llama-3.1 seems to perform better than Mistral-7B.

\begin{figure}[H]
    \centering
    \includegraphics[width=0.7\linewidth]{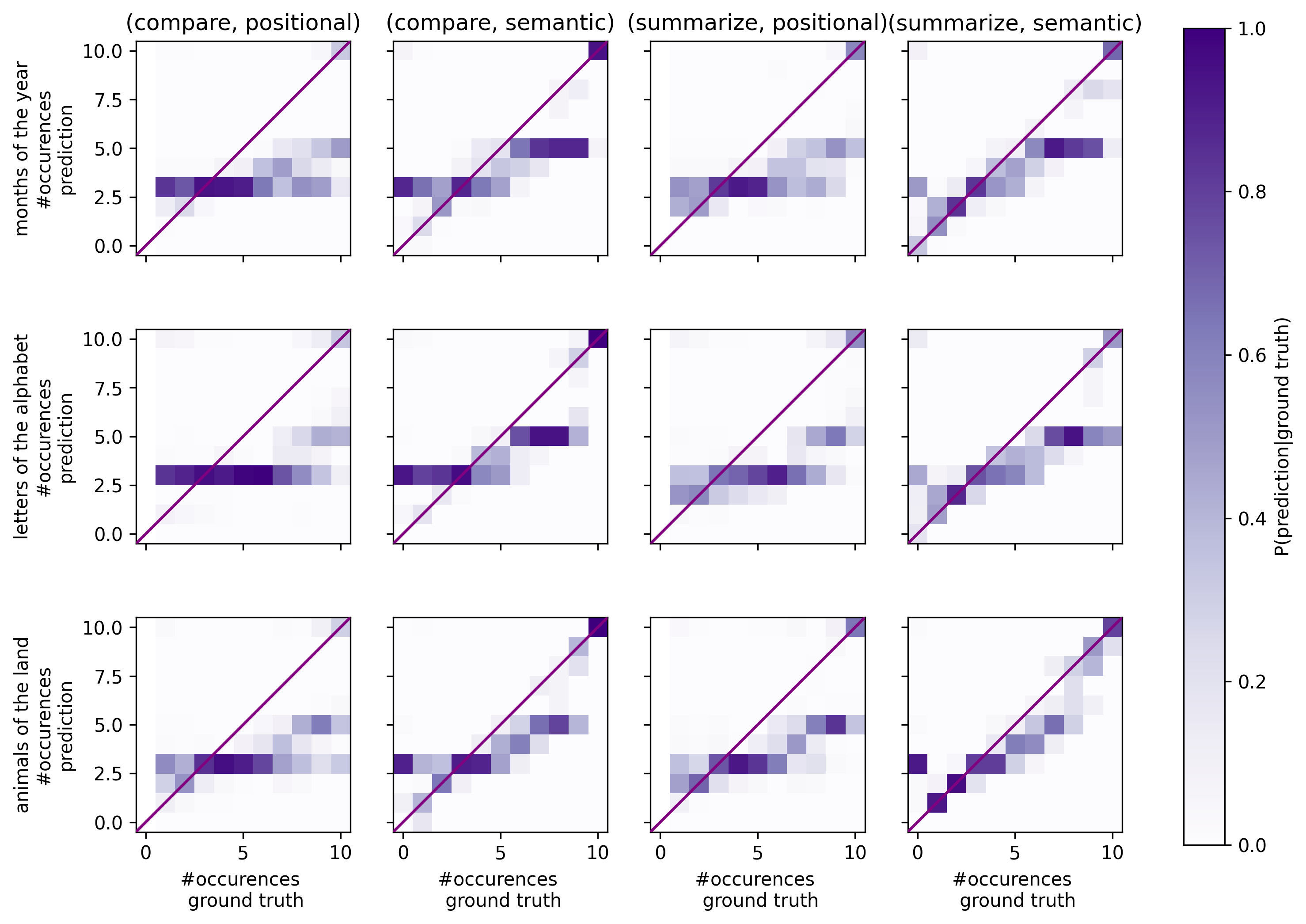}
    \caption{\textbf{Mistral 7B-v0.1} predictive performance on counting tasks with temperature zero. Samples of 100 lists of 10 items with each sampling strategy, where we generate all possible queries for different items and positions for each prompt type. }
    \label{app::fig:mistral}
\end{figure}

\begin{figure}[H]
    \centering
    \includegraphics[width=0.7\linewidth]{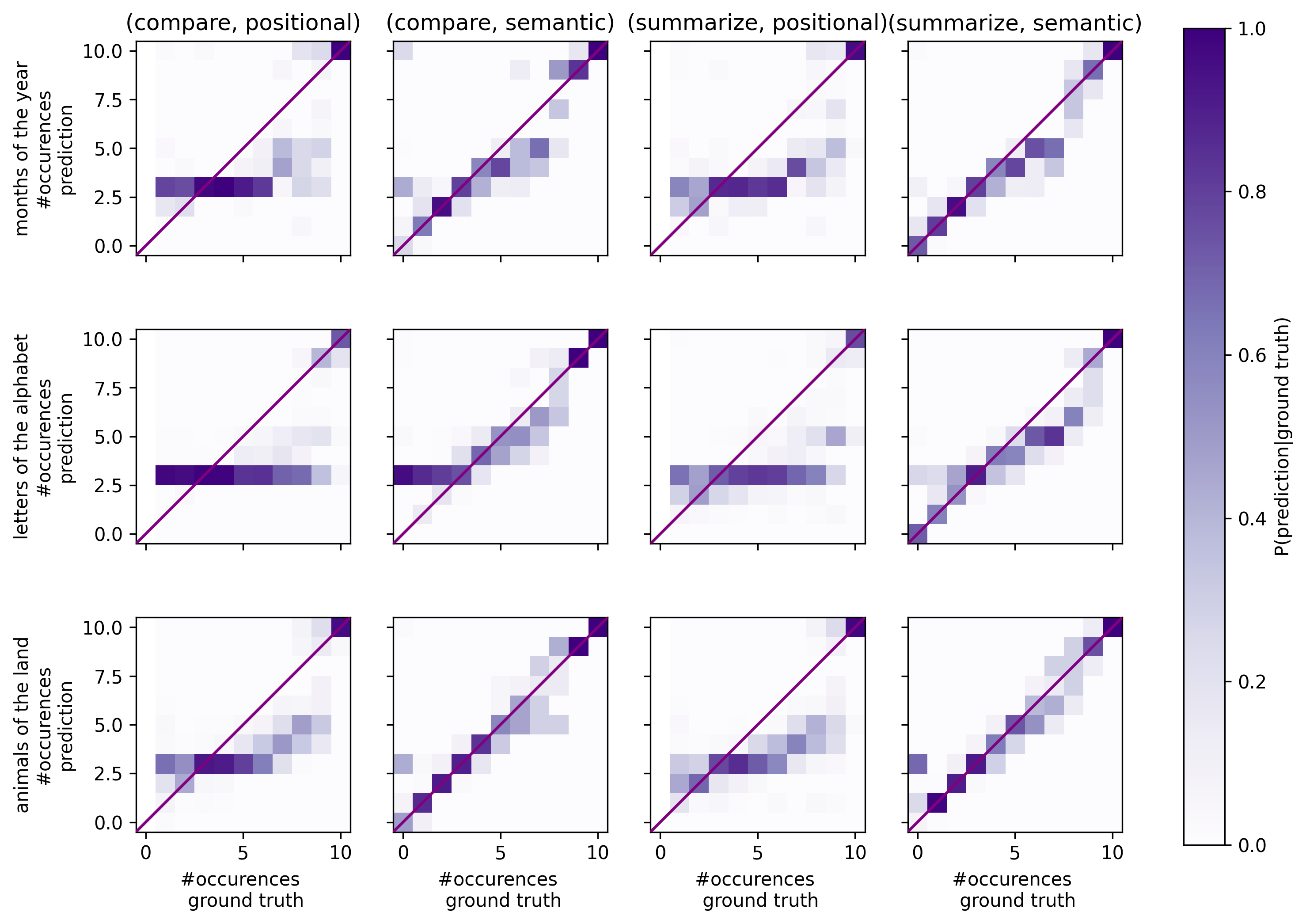}
    \caption{\textbf{Llama 3.1} predictive performance on counting tasks with temperature zero. Samples of 100 lists of 10 items with each sampling strategy, where we generate all possible queries for different items and positions for each prompt type. }
    \label{app::fig:llama}
\end{figure}

\newpage
\section{Explicit Constructions for Orthogonal Embeddings \texorpdfstring{$d=T$}{d=T}}\label{app::implementations}
\newcommand{\yell}{\gamma_\ell}
\subsection{Overview}\label{app::overview}
In the parameter regime where $d\geq T$ there is always an orthonormal basis of size $T$ in $\R^d$, these explicit constructions give the correct prediction for all input token sequences. For all the models we describe below, we define the sum of the hidden layer neurons as: 
\begin{equation}\label{eq::mapping}
    \yell = \sum_{i=1}^p \text{ReLU}(z_{\ell,i}) = \sum_{i=1}^p \text{ReLU}(W_1 \xembd'_\ell+b_1)_i
\end{equation}
In many cases, a simple linear regression can map the scalar $\yell$ to the correct count of tokens $x_\ell$, and we describe how to achieve this mapping to the classification problem in Section~\ref{app::value-to-vector}.\\
In the following, we characterize which parameters $W_1,b_1$ in \eqref{eq::mapping} allow for a correct mapping in each mechanism. Importantly, the architecture exhibits numerous symmetries due to the feed-forward ReLU network \citep{relusym}. To demonstrate feasibility, we select one specific implementation. 
In the main text we observe that there is no one-to-one correspondence between our explicit constructions and the learned weights, even though both functions achieve the same perfect accuracy. 
Throughout, unless otherwise specified, we assume that $E \in \R^{d\times T}$ is an orthonormal basis of $\R^d$, which we will use to create different forms of token embeddings.\\\\
\framebox[\textwidth][l]{

\parbox{0.95\textwidth}{
The supplementary code at \href{https://github.com/SPOC-group/counting-attention}{https://github.com/SPOC-group/counting-attention} contains executable \texttt{pytorch} models that have the weight configurations that are used to prove Propositions~\ref{prop:RC-dot}-\ref{prop:IC-with-lin} and \ref{prop:IC-dot}, which allows one to test the devised weight configurations for fixed $T,L,d$ in practice.
}

}

\subsection{Relation-based counting}\label{app::sec:rel-based}
\subsubsection{(\Mdot; \texorpdfstring{$p = 1$}{p=1})}\label{app::sec:rel-based:dot}
\newcommand{\ecnt}{\Tilde{e}_{\text{cnt}}}
\begin{proof}[Proof of Proposition~\ref{prop:RC-dot}]
We set $T=d>2$ with $L\geq2$ and $p = 1$. We choose the embeddings of the tokens of the \Mdot{} model as
\begin{align}
    e_t = \tilde{e}_t + \ecnt\,\,\,\,\forall t = 1, \dots T
\end{align}
where the set $E = \{\tilde e_t\}_{t=1}^T$ is an orthonormal basis of an arbitrary but fixed $T$-dimensional subspace of $\R^d$, and $\ecnt = \sum_{t=1}^T \tilde{e}_t$.
The key and query matrix are set to the scaled identity $W_K=W_Q=d^{1/4} I_d$ and hence the mixing layer $\Adot$ can be viewed as carrying out the unmodified dot-product operation between all pairs of tokens.
The first layer weights $W_1, b_1 \in \R^d$ can be fixed as
\begin{align}\label{eq:thew1construction}
    W_1 = \ecnt / (T+1) \,;\,\,\,b_1 = -(1+L(T+2))\,,
\end{align}
and the second layer weights $W_2, b_2 \in \R^L$ follow the recursion
\begin{align}
    (W_2)_1 &= -1 +\frac{1}{L+1} \,,\,\,\,&(b_2)_1 &= 0\,;\\
    (W_2)_\ell &= -1 + \frac{\ell}{L+1}\,,\,\,\,&(b_2)_\ell &= \left((W_2)_{\ell-1}-(W_2)_\ell\right)(\ell-0.5)+b_{\ell-1}\,, &\forall \ell=2,\dots,L.
\end{align}
Given these parameters, it holds that for tokens $1 \leq t,s \leq T$ their dot-product is
\begin{align}
    \langle e_t, e_s \rangle = \begin{cases}
        2 + T & \mathrm{if }\hspace{0.1cm} t \neq s\,,\\
        3 + T & \mathrm{if }\hspace{0.1cm} t = s\,.
    \end{cases}
\end{align}
Because of our choice of the query and key matrices, it directly follows that for tokens $x_\ell,x_m$ at positions $\ell$ and $m$ from a given sequence $\mathbf x$, their attention score is
\begin{align}
    a_{\ell m} = \begin{cases}
        2 + T & \mathrm{if }\hspace{0.1cm} x_\ell \neq x_m\,,\\
        3 + T & \mathrm{if }\hspace{0.1cm} x_\ell = x_m\,.
    \end{cases}
\end{align}
Hence, the mixed token after applying the residual connection is
\begin{align}
    \bar{x}_\ell' = \bar{x}_\ell + \sum_{m : x_\ell = x_m} (T+3) \bar{x}_m + \sum_{m : x_\ell \neq x_m} (T+2) \bar{x}_m
\end{align}
so that computing
\begin{align}
    \bar x_\ell ' W_1 &= \left\langle \bar x_\ell ' , \frac{\ecnt}{1+T} \right\rangle \\
    &= \left\langle \bar{x}_\ell,  \frac{\ecnt}{1+T} \right\rangle +  \sum_{m : x_\ell = x_m} (T+3)\left\langle \bar{x}_m,  \frac{\ecnt}{1+T} \right\rangle +  \sum_{m : x_\ell \neq x_m} (T+2)\left\langle \bar{x}_m,  \frac{\ecnt}{1+T} \right\rangle\\
    &= 1 + hist_{{\mathbf x}}(\ell) (T+3) + (L- hist_{{\mathbf x}}(\ell)) (T+2)\\
    &= hist_{{\mathbf x}}(\ell) + 1 + L(T+2).
\end{align}
Then the single hidden unit has the value $\yell = \mathrm{ReLU}(\bar x_\ell ' W_1  + b_1) = hist_{{\mathbf x}}(\ell)$.
It is easy to show (analogous to Fig.~\ref{fig:step-function}) that the output logits $c = \yell W_2 + b_2$ with $c \in \R^L$, correctly identify the count for integer values $x \in [1,\dots,L]$. This is because we constructed our recursion such that at a given input $x=\ell$ we have that $(W_2)_\ell(\ell-0.5)+(b_2)_\ell = (W_2)_{\ell-1}(\ell-0.5)+(b_2)_{\ell-1} $ and $(W_2)_\ell > (W_2)_{\ell-1}$, so it holds that
\begin{align}
    \argmax_{i = 1 \dots L} c_i(y) = \begin{cases}
        1 & y=1\\
        2 & y=2\\
        \dots\\
        L & y=L\,,
    \end{cases} 
\end{align}
which gives the correct classification output for all possible inputs, and hence solves the histogram task at 100\% accuracy.
\end{proof}
Note, however, that this weight configuration is only one example, and some symmetries in the model can lead to different but also 100\% correct algorithms. This is especially important as we compare the regime outlined in the Theorem with the weight configurations learned.

\subsubsection{(\MBOSsftm; \texorpdfstring{$p = 1$}{p=1})}\label{app::sec:rel-based:bossftm}
\begin{proof}[Proof of Proposition~\ref{prop:RC-with-BOS} for \MBOSsftm{}]
We set $T=d>2$ with $L\geq2$ and $p=1$ and consider the model \Mdotsftm{}.
Note that in this model every sequence $\mathbf x$ is prefixed with $\tBOS$ before it is fed into the embedding and then the mixing layer. Again we use mutually orthogonal embeddings.
$E = \{\tilde e_t\}_{t=1}^T$ is an orthonormal basis of an arbitrary but fixed $T$-dimensional subspace of $\R^d$, and $\ecnt = \sum_{t=1}^T \tilde{e}_t$.
We set $\eBOS = \sum_{t = 1}^T E_t$, where $e_t=E_t$ and the latter is a column of $E$.
Analogous to the background token from Proposition~\ref{prop:RC-with-BOS} there is only one direction  $p=1$ to detect in the feedforward model, so we set
\begin{align}
    W_1 = \eBOS;\,\,\,b_1 = -1.
\end{align}
For a given token $x_\ell$ we have that in the dot-product mechanism $\langle \eBOS, e_{x_\ell} \rangle = 1$, $\langle e_{x_\ell}, e_{x_m} \rangle = 1$ if $x_m = x_\ell$ and $0$ otherwise. Due to the softmax, the mixing coefficient is $a = e / ((k_{x_\ell}+1) e + (L-k_{x_\ell}))$ (where $e$ is Euler's number) for comparing $x_\ell$ to $\tBOS$ and to all the tokens where $x_\ell = x_m$, and $b = 1 / ((k_{x_\ell}+1) e + (L-k_{x_\ell}))$ otherwise, where, $k_{x_\ell} = hist_{\mathbf{x}}{(\ell)}$. Hence, the mixed token is:
\begin{equation}\label{eq:mised-token-a}
\xembd'_\ell = a\eBOS + ak_{x_\ell}\bar{x}_\ell + \sum_{x_m \neq x_\ell}b \bar{x}_m + \bar{x}_\ell.
\end{equation}
Applying $W_1$ and $b_1$, we obtain:
\begin{equation}
\begin{split}
\yell &=   aT + ak_{x_\ell}+b(L-k_{x_\ell})\\
&=aT+ak_{x_\ell} +1-a(k_{x_\ell}+1) \\
&=a(T-1)+1
\end{split}
\end{equation}
since $(k_{x_\ell}+1) a + (L-k_{x_\ell}) b = 1$ by normalization via the softmax function.
The value of $\yell$ has a dependence on $k_\ell$ through $a$ and can be readout into the correct classification as shown Fig.~\ref{fig:step-function}.
\end{proof}

\subsubsection{(\MBOS; \texorpdfstring{$p = 1$}{p=1})}\label{app::sec:rel-based:bos}

\begin{proof}[Proof of Proposition~\ref{prop:RC-with-BOS} - \MBOS{}]
We set $T=d>2$ with $L\geq2$ and $p=1$. The construction of the embeddings and $\eBOS$ is analogous to the construction from Section~\ref{app::sec:rel-based:bossftm} for \MBOSsftm{} in the same setting.
However, since no softmax is applied, the mixing coefficients as outputs of $\Adotsftm$ for comparing ($x_\ell$, $\tBOS$) or $(x_\ell,x_m)$ where $x_\ell = x_m$ is $a = 1$.
For $x_\ell \neq x_m$ it is $b=0$. Then from inserting these values in \eqref{eq:mised-token-a} and applying $W_1=\eBOS$ and $b_1=-T$ we obtain
\begin{align}
\yell &= k_{x_\ell}.
\end{align}
This clearly allows again the single neuron to be read off to the correct result similar to the construction from \eqref{eq:thew1construction}.
\end{proof}

Note that there is a simple alternative construction that uses the tagged embeddings from the constructive proof of Prop.~\ref{prop:RC-dot}.

\begin{proof}[Alternative Proof of Proposition~\ref{prop:RC-with-BOS} - \MBOS{}]
We set $T=d>2$ with $L\geq2$ and $p=1$. We note that by setting $\tBOS$ to zero we can achieve equivalence to the model $\Mdot{}$. Since according to Prop.~\ref{prop:RC-dot} there exists a weight configuration for $\Mdot{}$ which solves the histogram task, this configuration will also solve the histogram task for \MBOS{} with $\tBOS=0$.
\end{proof}

\subsection{Inventory-based counting}
\label{app::sec:inventory-based-counting}

\subsubsection{(\Mlinear; \texorpdfstring{$p = T$}{p=T}).}\label{app::sec:inventory-based-counting:lin}

\begin{proof}[Proof of Proposition~\ref{prop:IC-with-lin} - \Mlinear{}]
Assume that $T=d>2$ with $L>2$ and $p=T$ and the goal is to find a weight configuration for the model \Mlinear{}. 
As embeddings we directly use the orthonormal basis with $T$ vectors $e_t$ in $\R^d$, where vectors are the embeddings are for the $T$ tokens. We set
\begin{align}\label{eq:linear-ec}
    \Alin = \left[\begin{matrix}
        a & a & \cdots& a \\
        a & a &  &  \\
        \vdots & & \ddots & \\
        a &  & & a
    \end{matrix}\right];\,\,\,W_1 = E;\,\,\,\,b_1 = - 1,
\end{align}
where $a = 1/L$. We start by writing $z_{\ell,t}$ for $t\in\{1,...,p=T\}$, the $t$-th activation of the first hidden layer of the feed-forward module
\begin{equation}\label{eq:inv-hidden}
z_{\ell,t} = \sum_{m =1}^L a_{\ell m} \langle e_{x_m}, e_t\rangle + \langle e_{x_\ell},e_t\rangle -1\,.
\end{equation}
If $e_t=e_{x_\ell}$, we have
\begin{equation}
    z_{\ell,t} = k_{x_\ell} a + 1 - 1 = ak_{x_\ell}\,,
\end{equation}
where, $k_{x_\ell} = hist_{\mathbf{x}}{(\ell)}$, applying the ReLU to this scalar keeps its value unchanged. If $e_t\neq e_{x_\ell}$, we have
\begin{equation}
z_{\ell,t} =  ak_{e_t}+0-1 = ak_{e_t}-1 \leq 0\,.
\end{equation}
The right hand side of the above equation is negative given our choice of $a$, hence applying the ReLU returns 0. This means that, for each token in the input sequence, the contributions of orthogonal tokens cancel, leaving us with a single hidden hidden neuron activated. Hence the count can be read off from $\yell$. Since only one neuron is activated at a time, the readout from the same procedure as in \MBOSsftm{} can be applied to all hidden neurons $z_{\ell,t}$ simultaneously, instead of only one. 
This allows the model to solve the histogram task.
\end{proof}

\subsubsection{(\Mlinearsftm: \texorpdfstring{$p = T$}{p=T})}\label{app::sec:inventory-based-counting:linsftm}

\begin{proof}[Proof of Proposition~\ref{prop:IC-with-lin} - \Mlinearsftm{}]
Assume that $T=d>2$ with $L>2$ and $p=T$. With the statement already proven for $\Mlinear{}$, we note that we can construct $\Alinsftm$ such that it is equivalent to $\Alin$ from \eqref{eq:linear-ec} via
\begin{align}
    \Alinsftm = \left[\begin{matrix}
        a & a & \cdots& a \\
        a & a &  &  \\
        \vdots & & \ddots & \\
        a &  & & a
    \end{matrix}\right] = \text{softmax}\left(\left[ \begin{matrix}
        \alpha & \alpha & \cdots& \alpha \\
        \alpha & \alpha &  &  \\
        \vdots & & \ddots & \\
        \alpha &  & & \alpha
    \end{matrix}\right]\right),
\end{align}
where $a = 1/L$ which implicitly defines a choice of $\alpha$.
This means that the construction is equivalent to \Mlinear{} and it follows automatically that also \Mlinearsftm{} can solve the histogram task.
\end{proof}

\subsubsection{(\Mdotsftm: \texorpdfstring{$p = d = T$}{p= = d = T})}\label{app::sec:inventory-based-counting:dotsftm}
\begin{proposition}[IC for \Mdotsftm{}]\label{prop:IC-dot}
For \Mdotsftm{} and
given $L, T > 2$ there exists a configuration of weights which solves the histogram task for $p \geq T$
and $d \geq T$.
\end{proposition}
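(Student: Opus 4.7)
\emph{Proof proposal.} The plan is to reduce to the \Mlinearsftm{} case by choosing the query/key parameters so that the dot-product-plus-softmax attention produces a constant, uniform mixing matrix, after which the inventory-based construction of Proposition~\ref{prop:IC-with-lin} applies verbatim. Specifically, I would set $\W_Q = 0 \in \R^{d \times d}$ and leave $\W_K$ arbitrary. Then for every input $\Xembd$,
\[
\Adot(\Xembd) = \tfrac{1}{\sqrt{d}}\,\Xembd \,\W_Q \W_K^T\, \Xembd^T = 0_{L\times L},
\]
so applying the row-wise softmax gives $\Adotsftm(\Xembd) = \tfrac{1}{L}\mathbf{1}\mathbf{1}^T$, which coincides exactly with the uniform attention matrix of entry $a = 1/L$ used by \Mlinear{} in Appendix~\ref{app::sec:inventory-based-counting:lin}.

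With the mixing now input-independent, I would copy the embedding and feed-forward construction from that earlier proof: take the token embeddings $e_1,\dots,e_T \in \R^d$ to be an orthonormal family (available since $d \geq T$), set $W_1 = E \in \R^{d\times T}$ (the matrix whose columns are the $e_t$'s), pad the remaining $p - T$ hidden columns with zeros (allowed since $p \geq T$), and take $b_1 = -\mathbf{1}$. The residual-connected mixed token $\xembd'_\ell = \xembd_\ell + \tfrac{1}{L}\sum_{m=1}^L \xembd_m$ then reproduces the hidden-layer pattern analyzed in \eqref{eq:inv-hidden}: after the ReLU, the unit indexed by $t = x_\ell$ fires at value $\hist{\ell}{\mathbf x}/L$ and all the others vanish. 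In particular the count is isolated in a single hidden coordinate per position.

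Finally, $W_2$ and $b_2$ are inherited, up to a rescaling of the domain by $L$, from the scalar-to-one-hot readout described in Appendix~\ref{app::value-to-vector}; since only one hidden unit is active per position, applying this readout independently to each hidden coordinate returns the correct class. Because the entire argument is a reduction rather than a new calculation, there is no substantive obstacle. The only points to verify carefully are that $\W_Q = 0$ is a legitimate weight configuration under \eqref{eq::attention} (it is, as the expression is polynomial in $\W_Q, \W_K$ and the row-wise softmax of an all-zero matrix is well-defined and uniform) and that the redundant dimensions when $p > T$ or $d > T$ can be carried as zeros without disturbing the readout. The substantive inventory-based counting argument has already been done in Proposition~\ref{prop:IC-with-lin}, so no new technical machinery is required.
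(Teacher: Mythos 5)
Your proof is correct, but it takes a genuinely different route from the paper's. You neutralize the attention mechanism entirely by setting $\W_Q=0$, so that $\Adotsftm(\Xembd)=\tfrac{1}{L}\mathbf 1\mathbf 1^T$ for every input, and then invoke the inventory-based construction already proven for \Mlinear{}/\Mlinearsftm{}; this is a clean reduction, and since the proposition only asserts existence of \emph{some} weight configuration, a degenerate choice of $\W_Q$ is perfectly admissible. The paper instead keeps the dot-product attention semantically active, setting $\W_K=\W_Q=d^{1/4}I_d$ with orthonormal embeddings so that pre-softmax scores are $1$ for equal and $0$ for distinct tokens; after the softmax the attention weights become a \emph{nonlinear} function $e/((L-k)+ek)$ of the count $k=\hist{\ell}{\mathbf x}$, and the proof must then argue separately that the linear readout can still invert this nonlinear map because $k$ ranges over only $L$ discrete values. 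What your reduction buys is brevity and the avoidance of that extra readout argument (your single active hidden unit fires at exactly $k/L$, linear in the count). What the paper's construction buys is fidelity to the mechanism actually learned: in Fig.~\ref{fig:softmax-introspection} the trained \Mdotsftm{} attention matrix visibly distinguishes same from different tokens, and the observed nonlinear scaling of the decision boundaries is attributed precisely to the softmax normalization that your construction eliminates. The minor points you flag (zero-padding the surplus hidden columns against the $-1$ bias, embedding the orthonormal family when $d>T$) are handled the same way in the paper and pose no obstacle.
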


\begin{proof}[Proof for Proposition~\ref{prop:IC-dot}]
We assume $L, T > 2$ and $p = T$
and $d = T$ and we consider \Mdotsftm{}. 
As previously for $\Mdot{}$ in Prop.~\ref{prop:RC-dot}, we set the key and query matrix to the scaled identity $W_K=W_Q=d^{1/4} I_d$. We use an orthonormal basis of $\R^d$ to define the parameters $e_t \in \R^d$ for the the token embeddings. In $\Adotsftm{}$ the pre-softmax mixing weights will be $1$ for equal and $0$ for different tokens due to the unit-norm token embeddings. Defining $k_{x_\ell} = \hist{\ell}{\mathbf x}$ for brevity, after the softmax we have  that
\begin{align}
    a_{lm} = \begin{cases}
         \frac{e}{(L-k_{x_\ell})+ek_{x_\ell}} & x_m = x_\ell,\\
         \frac{1}{(L-k_{x_\ell})+ek_{x_\ell}} & \mathrm{else}.
    \end{cases}
\end{align} Hence, for $e_t\neq{e_{x_\ell}}$
\begin{equation}\label{negative}
\langle\xembd'_\ell, e_{t} \rangle = \frac{k_{e_t}}{(L-k_{x_\ell})+ek_{x_\ell}} < 1,
\end{equation}
while for $e_t={e_{x_\ell}}$
\begin{equation}\label{positive}
\langle\xembd'_\ell, e_{x_\ell} \rangle = \frac{k_{x_\ell} e }{(k_{x_\ell} e + (L-k_{x_\ell}))} + 1,
\end{equation}
where the extra summand comes from the residual connection. Hence, by setting 
\begin{align}
    W_1 = E;\,\,\,\,b_1 = -1,
\end{align}
and applying the ReLU activation, \eqref{negative} will be 0, while \eqref{positive} will implicitly give us the counts as:
\begin{equation}
k_{x_\ell} = (L \yell)/(-e \yell + \yell + e)
\end{equation}
While the final layer cannot immediately implement non-linear functions in $\yell$, it can take advantage of the fact that $\yell$ can take only $L$ different values, similar to how we constructed $W_2$ and $b_2$ in Section~\ref{app::sec:rel-based:dot}. Since eventually we need to map the $L$ values of $\yell$ to the counts $[1,\cdots,L]$ the linear output layer is sufficient to implement this non-linear discrete map. Fig.~\ref{fig:step-function} shows an example for this map for a given example. This allows the model to solve the histogram task.

The statement for $p > T$
and $d > T$ follows as we can simply set the surplus of parameters in the hidden layer/embeddings to zero.
\end{proof}
\subsection{Mapping a scalar to a categorical one-hot encoding}\label{app::value-to-vector}
\begin{figure}[H]
    \centering

    \includegraphics[width=0.5\textwidth]{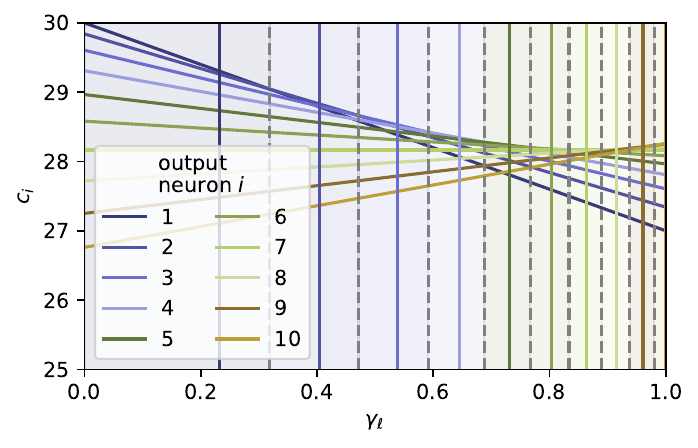}

    \caption{{Demonstration of a hidden neuron output $\yell$ which is mapped to $L=10$ different neurons $c_i$ using a single output layer, according to the decision boundaries shown by the dotted lines. After applying the argmax function to the 10 output neurons, the highest value gives the discrete output. The solid lines mark the values a single hidden neuron would achieve for different counts in the explicit construction of the \Mdotsftm{} model.}}
    \label{fig:step-function}
\end{figure}
It is straightforward to map a single scalar $\yell$ to a series of neurons which activate one after another. 
This is needed as the second part of the feed-forward parameters to transform the count measured by the sum of the hidden neurons $\yell$  to the discrete categorical representation of the output vector. Every output logit is a linear function of the hidden neuron's value. Since in our constructions we only map functions, where the ground truth output logit corresponds to an interval $[a,b]\in R$, the superposition of linear functions with increasing slope allows us to realize such a mapping. A visual sample is given in Fig.~\ref{fig:step-function} for \Mdotsftm{}.
In Fig.~\ref{fig:empirical-output-func-lienarsftm} we show the outputs for the \Mlinearsftm{} model with the best accuracy for $T=32$ for every $p,d$ ran in Fig.~\ref{fig:learning-regimes}.
While it is possible to learn the count from one hidden neuron only using inventory-based counting for each neuron, for some examples the count information seems to be spread out over several hidden neurons: The output logits are non-linear in the count and can hence not rely on a single hidden neuron only.
\begin{figure}[H]
    \centering
    \includegraphics[width=0.9\textwidth]{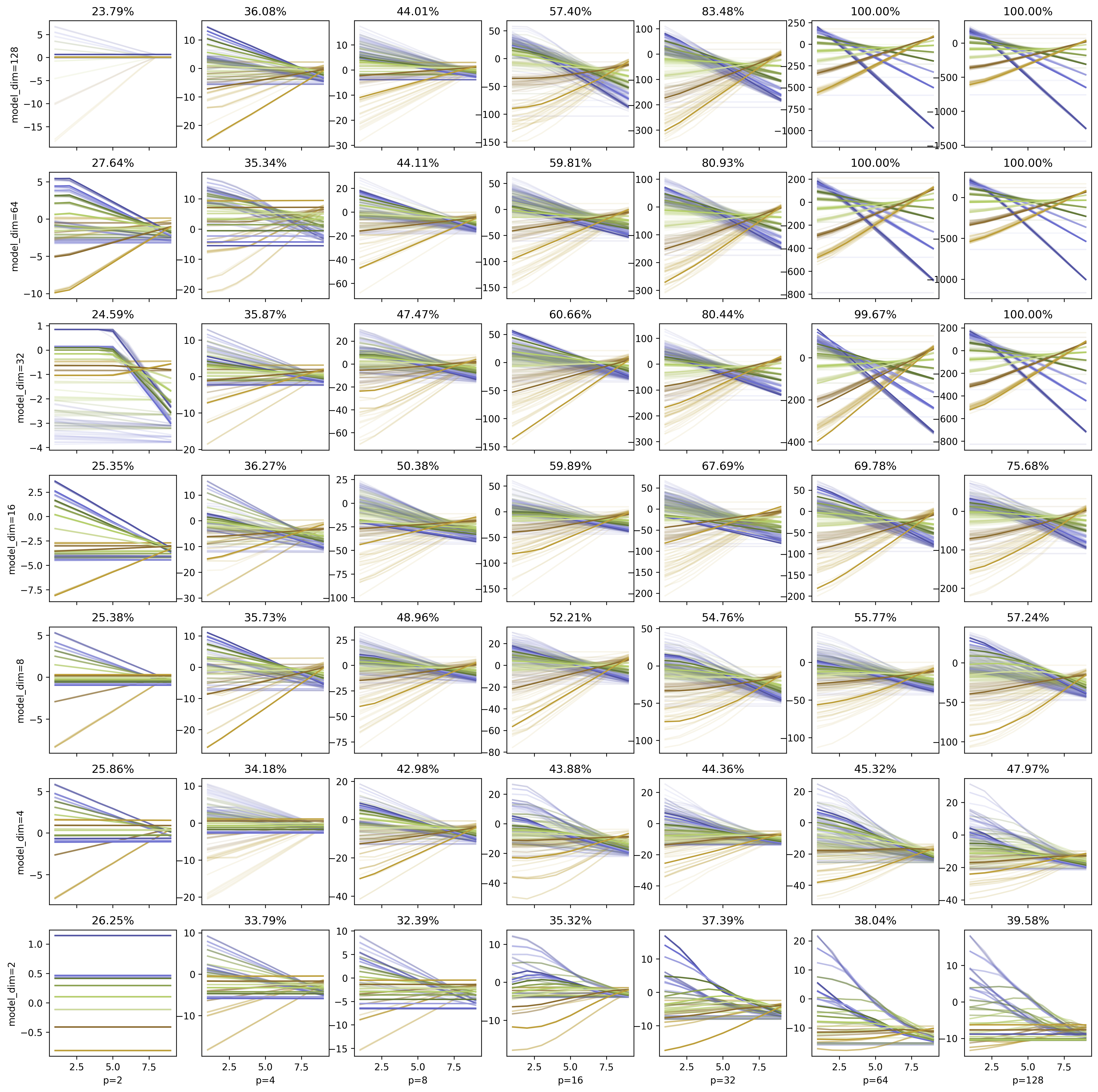}
    \caption{The output neurons $c_i(\mathbf x_\ell)$ visualized for examples of a learned version of \Mlinearsftm{} for several model dimensions $d$ and hidden layer sizes $p$. Note that differently from Fig.~\ref{fig:step-function}, in this case the x-axis shows the number of occurrences $\hist{\ell=0}{\mathbf x} = 1, \ldots, L-1$ of the token $t$ in an input sequence $\mathbf{x} = [t,\cdots,t,v,\cdots,v]$ that contains otherwise only a token $v \neq t$ (and \textit{not} the activation of a hidden neuron).  We show the activations $c_i$ of the final layer output neurons activations (logits) in terms of the number of occurrences of a given token in the input. The colors represent the different output predictions and are as in the explicit construction from Fig.~\ref{fig:step-function}. We show several activations for different tokens $t \in \mathcal{T}$, where $T=32$, and we highlight one of the example tokens $t$ with a wider line. While similar to the explicit construction from Fig.~\ref{fig:step-function}, the models with 100\% accuracy are not necessarily linear in the count.}
    \label{fig:empirical-output-func-lienarsftm}
\end{figure}
\newpage
\section{Explicit Constructions for Linearly Dependent Embeddings \texorpdfstring{$d<T$}{d<T}}\label{app::implementations_d_lessthan_T}

\subsection{Overview}\label{app::overview2}

In this section, we discuss the scenario when $d < T$, i.e. when the embeddings are necessarily linearly dependent. In that case, we can no longer assume that there exist embeddings with $\langle e_t, e_s \rangle = 0$ for all $t \neq s$.
Nonetheless, also in this regime for some models it is possible to provide explicit constructions of the weights that have 100\% accuracy.
This relies on the fact that the prediction problem is inherently discrete, i.e. it chooses exactly one among $L$ classes.
When we examine $\yell \in \R$ from \eqref{eq::mapping} which is mapped to the discrete class through the readout layer (see for example Fig.~\ref{fig:step-function}), we notice that the class boundary (the gray dashed class borders) can be placed variably in the margin between the values that $\yell$ assumes for different counts $k_{x_\ell}$ (solid lines).
In the following explicit constructions, our goal is to design embeddings with $d < T$ in such a way that we maximize the aforementioned margin: there will be pairs of token embeddings in the alphabet that have non-zero similarity $\langle e_t, e_s \rangle$, and in \eqref{eq::mapping} this will create non-zero terms that will alter the value of $\yell$. 
This means that for every possible sequence with $k$ occurrences of token $x_\ell$, the hidden activation $\yell$ will assume values in a certain range $[\yell^{\mathrm{lower}}(k),\yell^{\mathrm {upper}}(k)]$.
If these ranges overlap for different $k$, the count cannot be identified.
However, we construct embeddings such that every for every $k=1,\ldots,L-1$ it holds that  
\begin{align}\label{eq:margin-yl}
\yell^{\mathrm{upper}}(k) < \yell^{\mathrm{ lower}}(k+1)\,,
\end{align}
so we can still use a construction as in Fig.~\ref{fig:step-function} to correctly compute the final count. In the remainder of this section, we introduce explicit constructions with $d < T$ for a given $L$, both for the cases where we have relation-based counting and inventory-based counting (the same argument as above transfers to $z_{\ell,t}$ from \eqref{eq:inv-hidden}). 
Notably, for the explicit constructions we propose, the function of the lowest achievable $d(p,T,L)$ differs across different mixing types. To summarize:
\begin{itemize}
    \item For models with $\mathbf{A}$ constant in the inputs or models \textit{without} softmax activation, our explicit construction relies on an embedding matrix with a small mutual coherence. The mutual coherence is a concept from compressed sensing and coding theory that ensures that the maximal similarity between pairs of vectors is small \citep{mutalcoherence}.
    We can upper bound the mutual coherence that the margins of the construction can tolerate to still achieve perfect accuracy in terms of a given $L$.
    At the same time, the mutual coherence of a set of vectors is naturally lower bounded in terms  of the number of vectors $T$ and their respective dimension $d$, known as the Welch bound \citep{Welch1974LowerBO}. When this bound can be attained and $T,L$ are given, this leads to the following bounds on $d$ for the different models, as outlined in Prop.~\ref{prop:mutualcoherence}, as
    \begin{itemize}
        \item [] {(\Mlinear{}, \Mlinearsftm{};  $p=T$): $ \left\lceil\frac{T(2L-3)^2}{T-1+(2L-3)^2}\right\rceil \leq d$,}
         \item [] {(\Mdot{}, \MBOS{}; $p=1$): $ \left\lceil\frac{T(2L-3)^2}{T-1+(2L-3)^2}\right\rceil\, + 1 \leq d$,}
        \item [] {(\Mdot{}, \MBOS{}; $p=T$): $\left\lceil\frac{T(L-1)}{T-1+(L-1)}\right\rceil \leq d$.}
    \end{itemize}
    \item For \MBOSsftm{} we rely on the fact that the softmax function accentuates the largest value and thereby can drive attention scores for equal tokens $a_{ii}$ higher relative to attention scores of non-equal tokens $a_{ij}$.
    This distinguishes it from the previous case, and allows us to state Prop.~\ref{prop:softmax} for which we describe an explicit construction that solves the histogram task with 
    \begin{itemize}
        \item [](\MBOSsftm{}{}; $p=1$): $d \geq \lceil \log_2(T+1) \rceil + 2$.
        \item [](\Mdotsftm{}; $p=T$): $d \geq \lceil \log_2(T+1) \rceil + 2$.
    \end{itemize}
    Notably there is no explicit dependence on $L$ for the dimension. However, the smaller the dimension $d$ the more accurate computations and softmax numerical stability are required, as the softmax temperature depends on $L$. With infinitely precise computations we show it is even possible to achieve perfect accuracy with $d=4$, but for finite computations this might pose a problem when $L$ becomes too large.
\end{itemize}

\subsection{Explicit construction for bounded mutual coherence}\label{app::mutcoherence}

We define the \textit{mutual coherence} $\mathcal M$ of a set of $T$ unit norm vectors  $\{v_1,\ldots,v_T\} \subset \R^d$ as
\begin{align}
    \mathcal{M} = \max_{i \neq j} | \langle v_i, v_j \rangle |\,.
\end{align}
This value is lower bounded for a given matrix by the Welch bound~\citep{Welch1974LowerBO}
\begin{align}\label{eq:welch}
    \mathcal M \geq \sqrt{\frac{T-d}{d(T-1)}} = \mathcal{W}(T,d)\,,
\end{align}
and equality can only be attained if $T < d^2$ \citep{STROHMER2003257}.
There is a large body of work in coding theory and compressed sensing concerning the existence and construction of a set of vectors that attains $\mathcal{M}$ at or close to $\mathcal{W}(T,d)$. Explicit constructions exist but are not known for every combination of $T$ and $d$. A list with existing constructions for the real space for small $T,d$ can be found in~\cite{fickus2016tablesexistenceequiangulartight}, but otherwise gradient-based optimization has been used to find good candidate matrices \citep{jiang2017gradient,jyothi2022telet}.

In order to prove Prop.~\ref{prop:mutualcoherence}, we use the following idea: For a given $T$, $L$ and $p$, we can derive an upper bound on the mutual information of the embeddings in terms of $L$, which is required to obtain perfect accuracy.
The form of this upper bound depends on the precise mixing strategy and the choice of $p$.
Through the Welch lower bound on $\mathcal{M}$ we can in turn obtain a lower bound on $d$ in terms of $L$ and $T$.
Note that the Welch bound cannot be attained for $T < d^2$ and in this case the bound on $d$ is strict.

\subsubsection{(\Mlinear{}, \Mlinearsftm{}; \texorpdfstring{$p=T$}{p=T})} 

\begin{proof}[Proof of Proposition~\ref{prop:mutualcoherence} - \Mlinear{}]
To show the bound on $d$, we analyze the inventory-based construction for $\Mlinear{}$ in \eqref{eq:linear-ec}.
Given that $p=T$, and $L>2$ is given, let us assume that there exists set of $T$ unit norm vectors $\{e_1,\ldots,e_T\} \subset \R^d$ with mutual coherence $\mathcal{M}$.
We use these vectors as our embeddings.

The value $z_{\ell,t}$  for $t = x_\ell$, with $W_1=[e_1,\ldots,e_T]$ and $b_1 = -1$ is
\begin{align}\label{eq:another-square}
    z_{\ell,t} = ak_{x_\ell} + a \sum_{m : x_m \neq t} \langle e_{x_m},e_t\rangle \,,
\end{align}
and using that fact that the mutual coherence bounds the absolute value of the inner product
\begin{align}
    ak_{x_\ell} - a\mathcal{M}(L-k_{x_\ell}) \leq z_{\ell,t}\leq ak_{x_\ell} + a\mathcal{M}(L-k_{x_\ell}) \,.
\end{align}
Similarly, for $t \neq x_\ell$ and $a = 1/L$ it still holds that 
\begin{align}
    z_{\ell,t} \leq a k_t + a(L-k_t)\mathcal M - 1 + \mathcal M \leq 0\,,
\end{align}
provided that $\mathcal M < 1/(L+1)$, for the worst case where $k_t=L-1$.
This means that the ReLU sets all hidden neurons $z_{\ell,t}$ to zero when $t \neq x_{\ell}$, and are therefore no contribution to the final result.
Then, defining
\begin{align}
    \yell^{\mathrm{lower}}(k) &= ak - a\mathcal{M}(L-k) \,, \\
    \yell^{\mathrm{upper}}(k) &= ak + a\mathcal{M}(L-k)\,,
\end{align}
we have that indeed for a sequence where $x_\ell$ occurs $k=1,\ldots,L-1$ times it holds that
\begin{align}
0 \leq \yell^{\mathrm{lower}}(k)  \leq \yell(k) \leq \yell^{\mathrm{upper}}(k).
\end{align}
The first inequality is required due to the ReLU and holds when $\mathcal M < 1/(L-1)$.
From \eqref{eq:margin-yl}  we have the condition that for all $k=1,\ldots,L$ it holds that
\begin{align}
    \yell^{\mathrm{upper}}(k) &< \yell^{\mathrm{ lower}}(k+1)\,,\\
    k + (L-k)\mathcal{M} &< (k+1)+(L-k-1)(-\mathcal{M})\,,\\
    \mathcal{M} &< \frac{1}{2(L-k)-1}\,,\\
    \intertext{and since we assume that there exist at least two different tokens in the sequence, minimizing the bound over $k$ leaves for $k=1$}
    \mathcal{M} &< \frac{1}{2L-3}\,.
\end{align}
which is valid provided that $L\geq{2}$. 
Collecting all previous bounds on $\mathcal M$, we conclude that when $L\geq 4$ the above construction achieves the correct counts with $\mathcal{M} < \frac{1}{2L-3}$.

The Welch bound \eqref{eq:welch} gives an upper bound on $\mathcal{M}$ in terms of $T,d$ and therefore yields the final condition
\begin{align}
    d \geq \left\lceil\frac{T(2L-3)^2}{T-1+(2L-3)^2}\right\rceil
\end{align}
under which the given weight configuration is able to solve the histogram task with perfect accuracy.
\end{proof}

For \Mlinearsftm{} the construction and conditions transfer directly, when the constant $\Alinsftm$ is constructed to match $\Alin$ exactly.

\subsubsection{(\Mdot{}, \MBOS{}; \texorpdfstring{$p=1$}{p=1})}\label{sec:dotbos-p1-dsmall} 

\begin{proof}[Proof of Proposition~\ref{prop:mutualcoherence} - \Mdot{}, $p=1$]
We assume that $L>2$ and $T$ given and we use a similar idea as the relation-based weight configuration from the proof of Prop.~\ref{prop:RC-dot} for \Mdot{} with $p=1$. For the token embeddings, we assume that we have a set of $T$ unit norm vectors with $\{v_1,\ldots,v_T\} \subset \R^{d-1}$ with mutual coherence $\mathcal{M}$, where $d>2$.
We set the entries of the $T$ embedding vectors $e_t \in \R^{d}$ to be
\begin{align}
    e_t = \begin{bmatrix}
    \\
    v_t\\
    \\
    \alpha
    \end{bmatrix}\,.
\end{align}
The shared counting subspace is defined on the last coordinate of the vectors via $e_{cnt} = [0,0,\ldots,1/\alpha]$.
Then
\begin{align}
    \langle e_t,e_t\rangle &= 1 + \alpha^2\\
    |\langle e_t,e_s\rangle| &\leq \mathcal{M}+\alpha^2
\end{align}
The mixed token  with the residual connection at position $\ell$ for a given input sequence is 
\begin{align}\label{eq:just-a-mix}
    \xembd'_\ell = \sum_{m=1}^L \langle e_{x_m}, e_{x_\ell},  \rangle e_{x_m} + e_{x_\ell}
\end{align}
and the single hidden neuron $\yell$ for a bias term $b_1=0$ and $W_1=e_{cnt}$
\begin{align}
    \yell = \langle e_{cnt}, \xembd'_\ell \rangle &= \sum_{m=1}^L \langle e_{x_m}, e_{x_\ell},  \rangle \langle e_{x_m}, e_{cnt}\rangle + \langle e_{x_\ell}, e_{cnt}\rangle\\
    &= k_{x_\ell} (1 + \alpha^2) + \sum_{m:x_m \neq x_\ell} \langle e_{x_m}, e_{x_\ell},  \rangle + 1
    \end{align}
    So that we can achieve for a given count $k$ the $\yell^{\mathrm{lower}}(k)  \leq \yell(k) \leq \yell^{\mathrm{upper}}(k)$ with 
    \begin{align}
    0 \leq \yell^{\mathrm{lower}}(k) &= k (1 + \alpha^2) - (L-k) (\mathcal{M} + \alpha^2) + 1\,,\\
    \yell^{\mathrm{upper}}(k) &= k (1 + \alpha^2) + (L-k) (\mathcal{M} + \alpha^2) + 1\,.
\end{align}
We achieve the upper bound from zero, when $\mathcal M < 2 / (L-1)$, assuming that $\alpha$ is close enough to zero so that is is negligible.
Finally, the condition from \eqref{eq:margin-yl} yields
\begin{align}
    \mathcal{M} &< \frac{1}{2(L-k)-1} - \frac{2(L-k)-2}{2(L-k)-1} \alpha^2
\end{align}
under the condition that $0 < \alpha < \sqrt{\frac{1}{2(L-k)-2}}$. Again, assuming there exist at least two different tokens in the sequence, the r.h.s. of the above expression is minimized for $k=1$ as
\begin{align}
    \mathcal{M} &< \frac{1}{2L-3} - \frac{2L-4}{2L-3} \alpha^2    
\end{align}
which is always positive assuming $L\geq{2}$. This is the relevant bound when we have a large enough $L\geq 5$ and again $\alpha$ is close enough to zero.
Again, combining this with the Welch bound \eqref{eq:welch} leads to
\begin{align}
    d-1 &\geq \left\lceil\frac{T(\frac{2L-3}{1-(2L-4)\alpha^2})^2}{T-1+(\frac{2L-3}{1-(2L-4)\alpha^2})^2}\right\rceil\,,
\end{align}
and when we choose $\alpha>0$ close to zero, as for \Mlinear{} before
\begin{align}
    d \geq \left\lceil\frac{T(2L-3)^2}{T-1+(2L-3)^2}\right\rceil\, + 1.
\end{align}
\end{proof}

This proof holds equivalently for \MBOS{} when we set the BOS token embedding to zero.

\subsubsection{(\Mdot{}, \MBOS{}; \texorpdfstring{$p=T$}{p=T})} We can decrease the required dimension $d < T$ even further than previously, when we have $p=T$ and implement inventory-based counting in the \Mdot{} model (and equivalently in the $\MBOS{}$ model). In that case, the lower bound on $d$ becomes more loose, because we combine the ideas we saw in \Mlinear{} and $p=T$ for inventory-based counting and the effects on the margin in \Mdot{} and $p=1$.

\begin{proof}[Proof of Proposition~\ref{prop:mutualcoherence} - \Mdot{}, $p=T$]
In our construction, for a given $T$ and $L$, we assume that there is a set of $T$ unit norm vectors $\{e_1,\ldots,e_T\} \subset \R^d$ with mutual coherence $\mathcal{M}$ upon which we build our embeddings.
Note that the only difference to the previous relation-based case $p=1$ is that this time there is no extra counting direction.
Importantly, we set $K=d^{1/4} I_d$ as before, but $Q=\frac{1}{L} d^{1/4} I_d$.
This gives an extra factor in the attention scores.
Further, we set $b_1 = -1$ and the columns of $W_1 \in \R^{d\times T}$ to the embeddings $e_t$, as we did for $\Mlinear{}$.
This results in a mixed token $\xembd'_\ell$ according to \eqref{eq:just-a-mix}.
The hidden neuron is
\begin{align}\label{eq:square-term}
    z_{\ell,t} &= \frac{1}{L}\sum_{m=1}^L\langle e_{x_m}, e_{x_\ell}\rangle \langle e_t, e_{x_m}\rangle + \langle e_t, e_{x_\ell}\rangle -1\,\\
    \intertext{then with $t = x_\ell$ we have }
    z_{\ell,t} &= \frac{1}{L} \left(k_{x_\ell} +  \sum_{m : x_m \neq t}   \langle e_{x_m},e_t\rangle^2 \right)\,.\\
    \intertext{Note that the square in \eqref{eq:square-term} is what differs from the $z_{\ell,t}$ in \eqref{eq:another-square}.
This is because the term $\langle e_{x_m},e_t\rangle$ is once introduced through the dot-product attention and once through the dot-product via $W_1$. Conversely, with $t \neq x_\ell$ it becomes }
     z_{\ell,t} &= \frac{1}{L}\left(k_{t} \langle e_{t}, e_{x_\ell}\rangle +  \sum_{m : x_m \neq t}  \langle e_{x_m},e_{x_\ell}\rangle \langle e_{x_m},e_t\rangle \right) + \langle e_{t}, e_{x_\ell}\rangle - 1\\
     &\leq \frac{1}{L}\left(k_{t} \mathcal M +  (L-k_t)  \mathcal M \right) + \langle e_{t}, e_{x_\ell}\rangle - 1\\
     &\leq 2 \mathcal M - 1\\
     \intertext{if we set $\mathcal M < 0.5$, which we need anyways for $L\geq 2$ by the stronger upper bound on $\mathcal M$ that we derive in the following, we finally have for $t \neq x_\ell$}
     z_{\ell,t}&< 0\,.
\end{align} Again, negative $z_{\ell,t}$ are set to zero via the ReLU, and the final outcome $\yell = z_{\ell,t=x_\ell}$ depends only on a single hidden neuron \eqref{eq:square-term}.
This eventually leads to 
\begin{align}
   0 \leq \yell^{\mathrm{lower}}(k) &= \frac{k}{L} \,, \\
    \yell^{\mathrm{upper}}(k) &=\frac{1}{L} (k + \mathcal{M}^2(L-k))\,,
\end{align}
and using the same concept as before, while minimizing over $k$ and applying the Welch bound, to the upper bound 
\begin{align}
    \mathcal{M} &< \sqrt{\frac{1}{L-1}}\,.
\end{align}
The final bound is more loose than it was for $p=1$ as we only require
\begin{align}
    d  \geq \left\lceil\frac{T(L-1)}{T-1+(L-1)}\right\rceil\,.
\end{align}
\end{proof}

\subsection{Explicit Construction with binary representations and softmax}\label{app::softmaxexplicit}

In our final analysis we examine the key difference between the models \MBOSsftm{} and \MBOS{} -- the softmax activation.
In order to show Prop.~\ref{prop:mutualcoherence} we needed to construct embeddings with a low mutual coherence, because the term $\langle e_t,e_s\rangle$ introduced an error on the mixed token, when $t$ and $s$ were not equal.
Now, with the softmax activation applied to the mixing coefficients, the model can use the non-linearity of this transform to its advantage to separate the relative error.

Recall the softmax function is
\begin{align}
    \textrm{sftm}(\mathbf{z})_i = \frac{e^{z_i}}{\sum_{j=1}^n e^{z_j}} \quad \text{for} \quad i = 1, 2, \ldots, n\,,
\end{align}
and when we compute $\textrm{sftm}(\kappa \mathbf{z})_i$ we say it is a softmax with a inverse temperature $\kappa > 0$.
When $\mathbf{z}$ of length $L$ contains only two different values, one with  $k$ and the other with $L-k$ occurrences, then as $\kappa \to \infty$ the mass concentrates only on the larger value of the two, and sets the other to zero.
We use this intuition to create token embeddings that fulfill for all $t,s=1,\ldots,T$ and $s \neq t$
\begin{align}\label{eq:acondition}
    \langle e_t, e_t \rangle &= 1\,,\\
    \langle e_t, e_s \rangle &< 1 + \epsilon\,,
\end{align}
where $\epsilon >0$.

The idea is that the softmax with a high enough inverse temperature sets the term for different tokens, $\langle e_t, e_s \rangle$, \textit{close enough} to zero, essentially eliminating the noise.
Note that \eqref{eq:acondition} is a weaker condition on the set of token embeddings than for example the bound of the mutual coherence in terms of the sequence length $L$ \MBOS{} with $p=1$ in Section~\ref{sec:dotbos-p1-dsmall}. 
It allows us to obtain perfect accuracy with smaller $d$.
In the following, we describe the construction of the matrix explicitly.\\\\
\framebox[\textwidth][l]{

\parbox{0.95\textwidth}{
The supplementary code at \href{supplementary material}{supplementary material} contains executable \texttt{pytorch} models that have the weight configurations that are used to prove Propositions~\ref{prop:softmax} and the Remark for $d=4$, which allows one to test the devised weight configurations for fixed $T,L,d$ in practice.
}

}

\subsubsection{(\MBOSsftm{}; \texorpdfstring{$p=1$}{p=1})}\label{sec:appendixdetail}

\begin{proof}[Proof of Proposition~\ref{prop:softmax} - \MBOSsftm{}]
For a given $T,L>2$ we set the embeddings vectors to the binary representation of the token index $t=1,\ldots,T$ in $d' = \lceil\log_2(T+1)\rceil$ dimensions
\newcommand{\bin}{\mathop{\mathrm{bin}}}
\begin{align}\label{eq:binary-construction}
    e_t = \begin{bmatrix}
    \\
    \bin(t)\langle \bin(t), \bin(t) \rangle^{-1}\\
    \\
    \alpha
    \\
    0
    \end{bmatrix}\,\,\mathrm{;}\,\,\,\,\,\,\,
    e_{BOS} = \begin{bmatrix}
    \\
    \bin(0)\langle \bin(0), \bin(0) \rangle^{-1}\\
    \\
    1/\alpha\\
    1
    \end{bmatrix}\,.
\end{align}
where $\bin(t) = [v_1, \ldots, v_{d'}] \in \{0,1\}^{d'}$ with $t = \sum_{i=1}^{d'} v_i 2^{i-1}$. We select $\alpha > 0$.
Then we have that
\begin{align}
    \langle e_t, e_t\rangle &= 1 + \alpha^2\,,\\
    \alpha^2 \leq \langle e_t, e_s \rangle &\leq  \sqrt{1-\frac{1}{d'}} + \alpha^2 \leq 1 + \alpha^2 - \eps\label{eq:error-overlap}\,,\\
    \langle e_t, e_{BOS}\rangle &= 1\,,
\end{align}
{where $\sqrt{\frac{d'-1}{d'}} = \langle e_{2^{d'}-1},e_{2^{d'}-2}\rangle$, which has the largest overlap among all possible non-equal pairs of tokens, and the lower bound comes from all coordinates being positive.}
Using a readout on the direction only present in the $e_{BOS}$ token, namely, $W_1 = [e_{cnt}] = [0,\ldots, 0, 1] \in \R^d$ and $b_1 = 0$, we construct
\begin{align}
    \yell = \langle e_{cnt}, \xembd'_\ell \rangle &= \mathrm{sftm}(EE^T_\ell)_{0} \langle e_{BOS}, e_{cnt}\rangle + \sum_{m=1}^L \mathrm{sftm}(EE^T_\ell)_{m+1} \langle e_{x_m}, e_{cnt}\rangle + \langle e_{x_\ell}, e_{cnt}\rangle\\
    &= \mathrm{sftm}(EE^T_\ell)_0\\
    &=
    \mathrm{sftm}([\langle e_\ell, e_{BOS}\rangle, \langle e_\ell, e_1\rangle,\ldots,\langle e_\ell, e_L\rangle ])_0
\end{align}
The goal of applying the softmax function is to diminish the contributions of error \eqref{eq:error-overlap}, while having the final dimension of the $e_{BOS}$ token be representative of the count of $x_\ell$.
The maximum error is induced when the upper bound \eqref{eq:error-overlap} is attained for all tokens in the sequence $\mathbf x$ that are not equal to $x_\ell$.
The minimum error is obtained when these different tokens attain the lower bound.
Without loss of generality on the ordering, this implies that for a given length $L$ and a softmax activation function with an inverse temperature $\kappa$\footnote{In order to introduce the inverse temperature $\kappa$ of the softmax in the model, we scale the query matrix. We set $K=d^{1/4} I_d$, but $Q=\kappa d^{1/4} I_d$.}, we have that
\begin{align}
    \yell^{\mathrm{lower}}(k) &=  \frac{e^{\kappa 1}}{e^{\kappa 1}+ k e^{\kappa (1+\alpha^2)} + (L-k) e^{\kappa (1+\alpha^2-\epsilon) }}\,,\\
    \yell^{\mathrm{upper}}(k) &=  \frac{e^{\kappa 1}}{e^{\kappa 1}+ k e^{\kappa (1+\alpha^2)} + (L-k) e^{\kappa \alpha^2 }}\,.
\end{align}
We explicitly need $\epsilon$ strictly greater than zero, since otherwise there is no information about the count in $\yell$ when it becomes independent of the count $k$.
Notice, that this time it holds that $\yell$ that correspond to higher values correspond to smaller counts, since a larger count corresponds to a larger denominator, i.e. a smaller $\yell$.
Due to this inverse relationship, for this model, we want that for all counts $k=1,\ldots,L-1$ that it holds that
\begin{align}\label{eq:myfave}
\yell^{\mathrm{upper}}(k+1) < \yell^{\mathrm{ lower}}(k)\,.
\end{align}
This can be achieved by setting the inverse temperature $\kappa$ accordingly. \\
In the following we show that there exists a $\kappa$ which fulfills \eqref{eq:myfave} for all $d'\geq 2$ and $L>2$.
Observe that $\yell^{\mathrm{upper}}(2) < \yell^{\mathrm{lower}}(1)$ implies the bounds for all other $k$.
We define the distance or margin as \newcommand{\dist}{\mathrm{dist}}
\begin{align}\label{eq:adistance}
    \dist(\kappa) = \yell^{\mathrm{lower}}(1) - \yell^{\mathrm{upper}}(2) \,.
\end{align}
Since at $\kappa=0$ both $\yell^{\mathrm{upper}}(2) = \yell^{\mathrm{lower}}(1) = 1/(L+1)$, the distance is zero. However then it becomes impossible to distinguish $k=1$ and $k=2$, as they receive the same weight. We therefore need the additional condition that $\yell^{\mathrm{upper}}(2) \neq \yell^{\mathrm{lower}}(1)$.
At $\kappa=0$, we observe that this function has a negative derivative, as
\begin{align}
    \frac{\partial}{\partial \kappa} \dist(\kappa)|_{\kappa= 0}
    &=  \mathrm{sftm}(\kappa z_{\mathrm{lower}})_0 \left((z_{\mathrm{lower}})_0-\sum_{i=0}^{L+1} (z_{\mathrm{lower}})_j \mathrm{sftm}(\kappa z_{\mathrm{lower}})_i\right) \\&-\mathrm{sftm}(\kappa z_{\mathrm{upper}})_0 \left((z_{\mathrm{upper}})_0-\sum_{i=0}^{L+1} (z_{\mathrm{upper}})_j \mathrm{sftm}(\kappa z_{\mathrm{upper}})_i\right) \\
    &= -\left(\frac{1}{L+1}\right)^2 \left( \left[ (1+\alpha^2) + (L-1)(1+\alpha^2-\epsilon)\right] - \left[2(1+\alpha^2) + (L-2)\alpha^2\right]\right) \\ 
    &=  -\left(\frac{1}{L+1}\right)^2 \left[ (L-2) - (L-1) \epsilon\right]\\
    &< 0
\end{align}
where the last bound is met when $0 < \epsilon < 0.5$ which is fulfilled already for $d'=2$ and when $L>2$.
As the distance function is continuous, there exists a $\kappa$ close to zero for which the $\dist(\kappa) < 0$.
Simultaneously, as $\kappa \to \infty$, we have that due to the concentration of the softmax probabilities on the largest entry, which here is $1+\alpha^2$, it holds that as $\kappa \to \infty$ we have $\dist(\kappa) \to 0$.
At the same time, the function approaches infinity from the positive regime. 
For large enough $\kappa$ we have $\yell^{\mathrm{upper}}(2) < \yell^{\mathrm{upper}}(1)$.\\
When we select the smallest possible $\kappa > 0$, we avoid computing functions with large exponential terms. 
To find the non-trivial root of $\dist(\kappa)$ numerically, we consider a simplification of \eqref{eq:adistance}. 
We define $u = e^\kappa$. Then it holds that we can solve
\begin{align}\label{eq:fixedpoint}
    {\dist(\kappa) =  0 = (L-1)u^{(1-\epsilon)} - u - (L-2)}
\end{align}
numerically for $\kappa > 0$.
This shows that we can find an explicit construction with 100\% accuracy with $p=1$ and $d'> 2$ for the \MBOSsftm{} when we have
\begin{align}
    d = \lceil \log_2(T+1) \rceil + 2\,.
\end{align}
For example, for the case of $L=10$ and $T=32$ this allows for a dimension $d=7$ with $\alpha=0.01$ (and for $T=31$ with the same settings $d=6$ suffices).
\end{proof}

\textbf{Remark ($d=4$).} In principle, it is enough to have some $\epsilon > 0$ that ensures that overlaps between different token embeddings are strictly less than one. In principle, we can find an arbitrary number of tokens $T$ that satisfy this condition for just $d'=2$.
Take for example the following construction. For $t = 1, \dots,T$ tokens with $T$ odd we can design the set of embeddings
\begin{align}
v_t = \begin{bmatrix}
\sqrt{\frac{t}{T}} \\
\sqrt{\frac{T-t}{T}}
\end{bmatrix}\,.
\end{align}
Each $\langle e_t, e_t \rangle= 1$ and for $t\neq s$ the overlap $\langle e_t, e_s \rangle \leq \langle e_{(T+1)/2}, e_{(T-1)/2} \rangle = \sqrt{T^2-1}/ T$. \\ This implies that $\epsilon \to 0$ as $T \to \infty$ at a rate $1/T$. Since smaller $\epsilon$ imply larger values of the temperature to solve \eqref{eq:fixedpoint}, this might become problematic when this exceeds the accuracy of computations.
Previously, for the binary representation construction from \eqref{eq:binary-construction}, we had that $\epsilon$ shrinks at a rate $\sim 1/\log_2(T)$. 
For the intermediate regime between $\log_{T}(T) + 1$ and $\log_2(T)$ dimensions, one can generalize this principle to arbitrary bases, e.g. $\log_3(T)>2$, resulting in a smaller dimension but also less favorable (smaller) $\epsilon$ -- this construction thus comes with a clear trade-off.

\subsubsection{(\Mdotsftm{}; \texorpdfstring{$p=T$}{p=T})}

\begin{proof}[Proof of Proposition~\ref{prop:softmax} - \Mdotsftm{}]
For this model, the explicit construction is analogous to the previous one. Instead of using $p=1$ we use $p=T$.
The selection of the embeddings is analogous, but instead of a counting direction we read off all the weight directions separately with $T=d$. Not having a counting direction also saves the additional two dimensions required for  \MBOSsftm{} with $p=1$. In the feed-forward layer with $W_1$ the explicit construction considers again $z_{\ell,t}$ for every token $t \in \mathcal T$.
The selection of the temperature is also analogous, with the exception that one has $L$ terms in the softmax instead of $L+1$.
\end{proof}

\section{Data Generation}\label{app::data-gen}
Every sample $\mathbf x =(x_1,\cdots,x_L)$ is generated recursively as follows, starting from size $K=L$ and alphabet $\mathcal{T}' = \mathcal{T}$:
\begin{enumerate}
    \item Sample an integer $k$ uniformly from $[1,\cdots,K]$. 
    \item Sample a token $t$ uniformly from $\mathcal{T}'$. 
    \item Set $x_i = t$ for all $i=k,\cdots,K$.
    \item Set $\mathcal{T}' = \mathcal{T}' \setminus \{t\}$ and $K = k$.
    \item If $K \neq 0$, repeat from 1.
    \item Set $\mathbf{x} = \texttt{shuffle}(\mathbf{x})$.
\end{enumerate}
In contrast to sampling the elements of each sequence uniformly at random from the alphabet, this simple strategy enables us to better control the distribution of counts in the training dataset.

\section{Additional Experiments}\label{app::extra-phase-diagrams}

\subsection{Best Accuracy}
\label{app::sec:add-measures}

In Fig.~\ref{fig:pd32-best}, we show the best reached accuracy during training over the five sample runs.
This gives insights into the feasibility of implementing a counting solution for a given combination of parameters $T,d,p$ of a model. 

\begin{figure}[ht]
    \centering
    \includegraphics[width=0.65\textwidth]{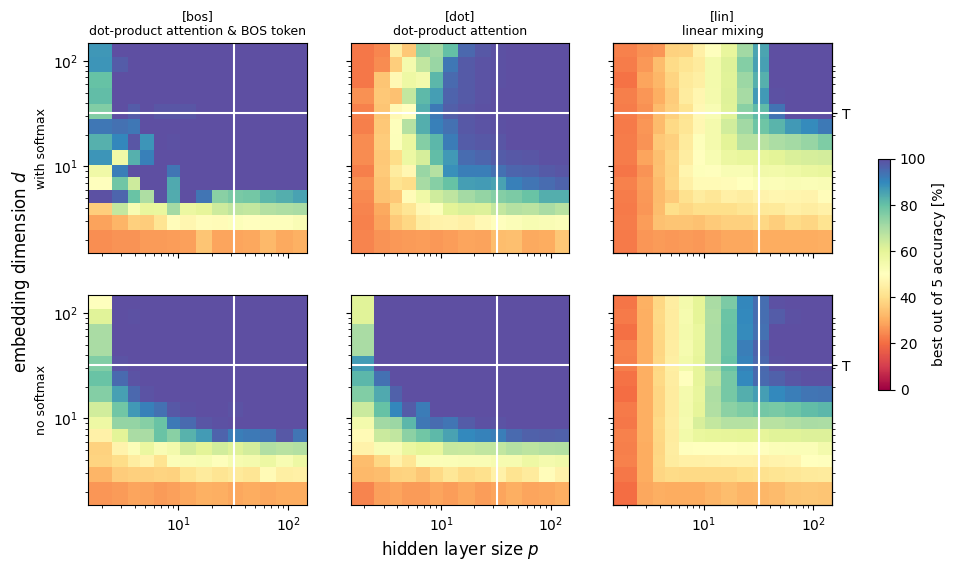}
    \caption{\textbf{Best accuracy during training.} Experiments from Fig.~\ref{fig:learning-regimes} ($T=32$), we show only the \textit{best accuracy} during training reached from the 5 randomly initialized runs per model/hyperparameter configuration.}
    \label{fig:pd32-best}
\end{figure}

\subsection{Variability}
\label{app::sec:variability}
In Fig.~\ref{fig:pd32-std} we explore the influence of initialization on the performance via the variability of the final accuracy for several runs. Especially in the $p,d<T$ regime where $\MBOSsftm$ is able to reach an accuracy relatively close to 100\%, the variability of the accuracies resulting from different initializations is quite large.

\begin{figure}[H]
    \centering
    \includegraphics[width=0.65\textwidth]{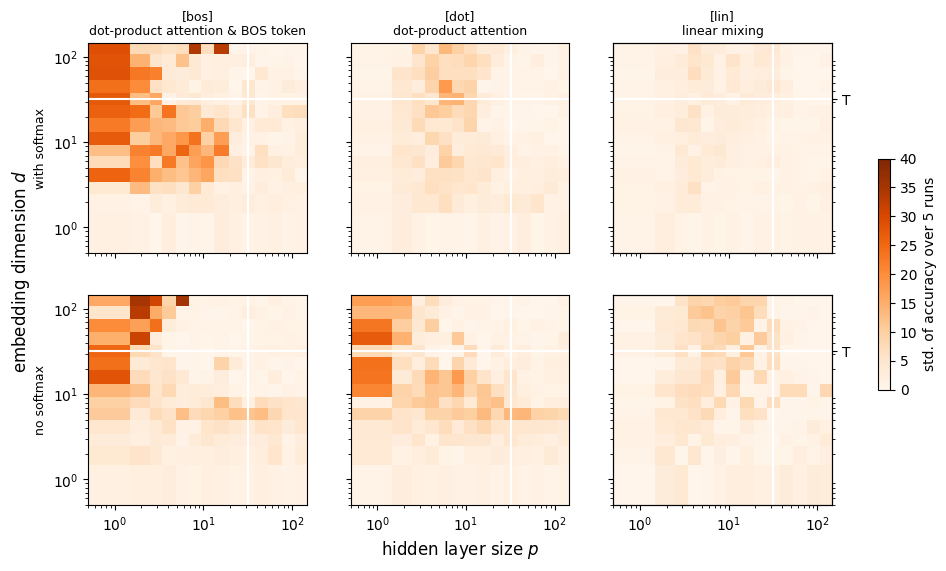}
    \caption{\textbf{Variability to randomness in initial conditions and optimization.}  Experiments from Fig.~\ref{fig:learning-regimes} with $T=32$, standard deviation of the accuracy reached after training from the 5 randomly initialized runs per model/hyperparameter configuration.}
    \label{fig:pd32-std}
\end{figure}

\subsection{Model with Random but Fixed Embeddings}
\label{app::sec:random-embd}
In Fig.~\ref{fig:random_comparison}, we repeat the experiments of Fig.~\ref{fig:learning-regimes}, but for embeddings that are frozen throughout training (also 5 runs).
In the regime $d < T$ where there is no mutual orthogonality possible, the random embeddings result in worse performance than the learned ones. Especially for \MBOSsftm{}, learning the embeddings increases the performance strongly in some regimes. This indicated that the models indeed learn adapted embeddings here.
\begin{figure}[H]
    \centering
    \includegraphics[width=0.50\textwidth]{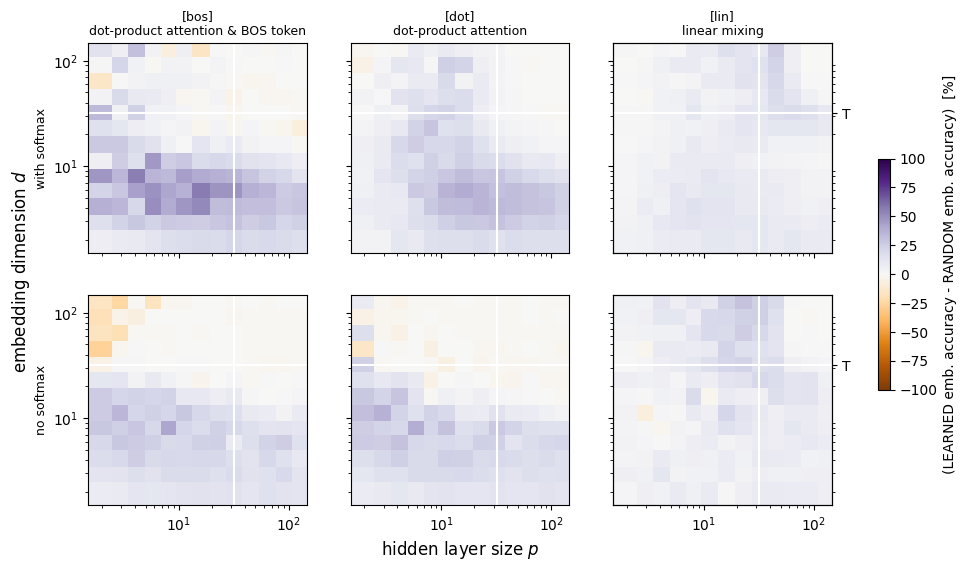}
    
    \caption{\textbf{Random but fixed embeddings (not learned).} The difference between learned and random embeddings for $T=32$. Orange indicates that the random embeddings perform better on average. Purple indicates that the learned embeddings perform better on average. Experimental settings as in Fig.~\ref{fig:learning-regimes}.}
    \label{fig:random_comparison}
\end{figure}

\subsection{BOS mixing token}\label{app::sec:BOSmixing}
In Fig.~\ref{fig:BOS-model-interpretation} in the main, we describe how the $\tBOS$ is the main predictor for the count. Here, we provide more evidence by showing how the count predictions for mixed tokens $\xembd'$ output by the feature transform $\f$ are invariant to the type of other token present in the mixed token.
The results for four different tokens are shown in Fig.~\ref{fig:app:BOS-model-extras}.

\begin{figure}[H]
    \centering
    \includegraphics[scale=0.25]{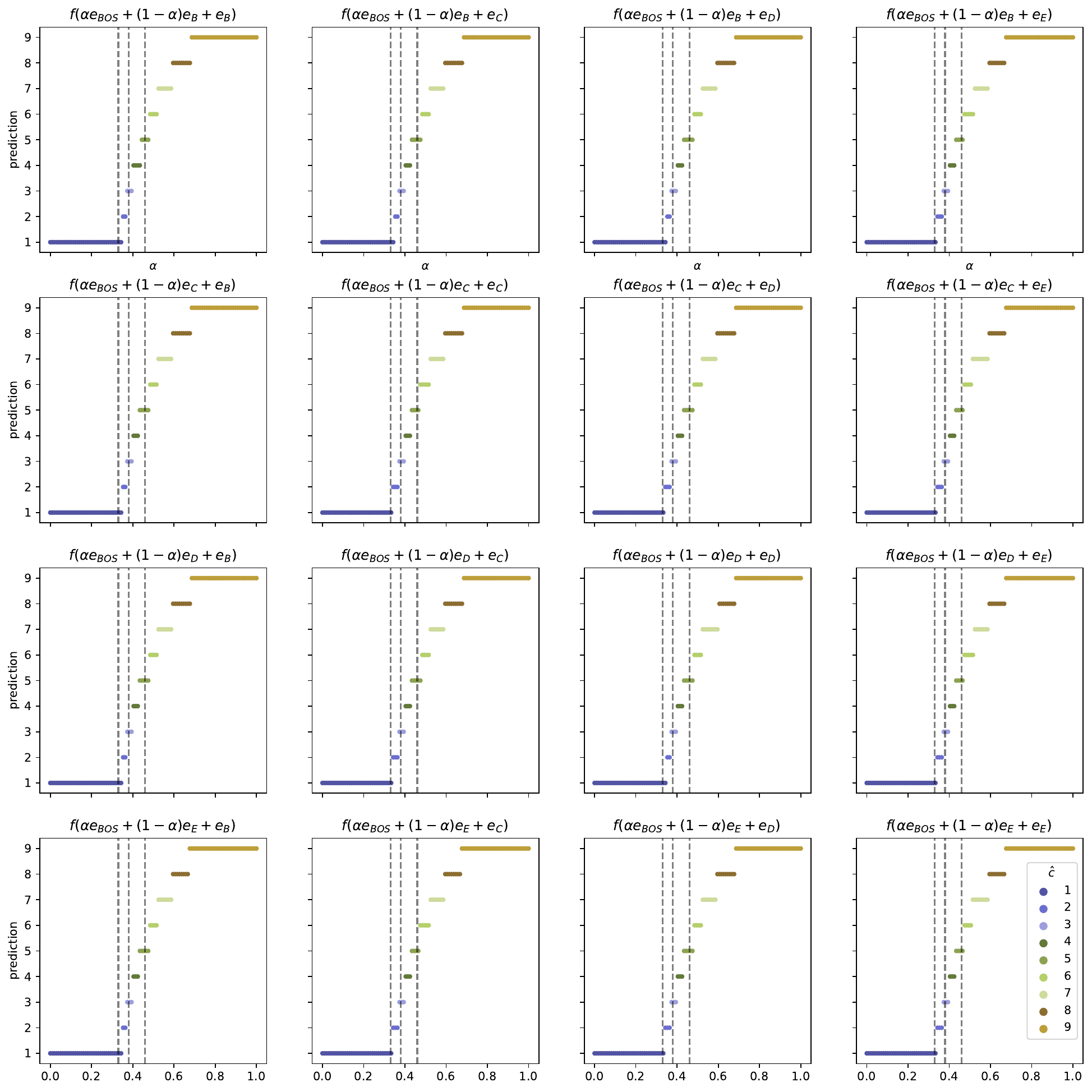}
    \caption{\textbf{Visualization of feature transform.} For the same model as in Fig.~\ref{fig:BOS-model-interpretation}, we vary the inputs to the feature transformation $\f$ to show it is independent on the precise input sequence, but only depends on the prevalence of $\tBOS$. We vary the inputs between the learned tokens $[B,C,D,E]$.}
    \label{fig:app:BOS-model-extras}
\end{figure}

\subsection{Singular Value Decomposition of \texorpdfstring{$W_1$}{W1}}\label{app::singular-values}

In Fig.~\ref{fig:singular-values} we show the distribution of singular values of $W_1$ for several runs of the model to investigate whether models that are capable of both IC and RC are implementing the more memory heavy IC or the same solution that they can find for $p=1$ with RC.

\begin{figure}[H]
    \centering
    \includegraphics[width=0.75\textwidth]{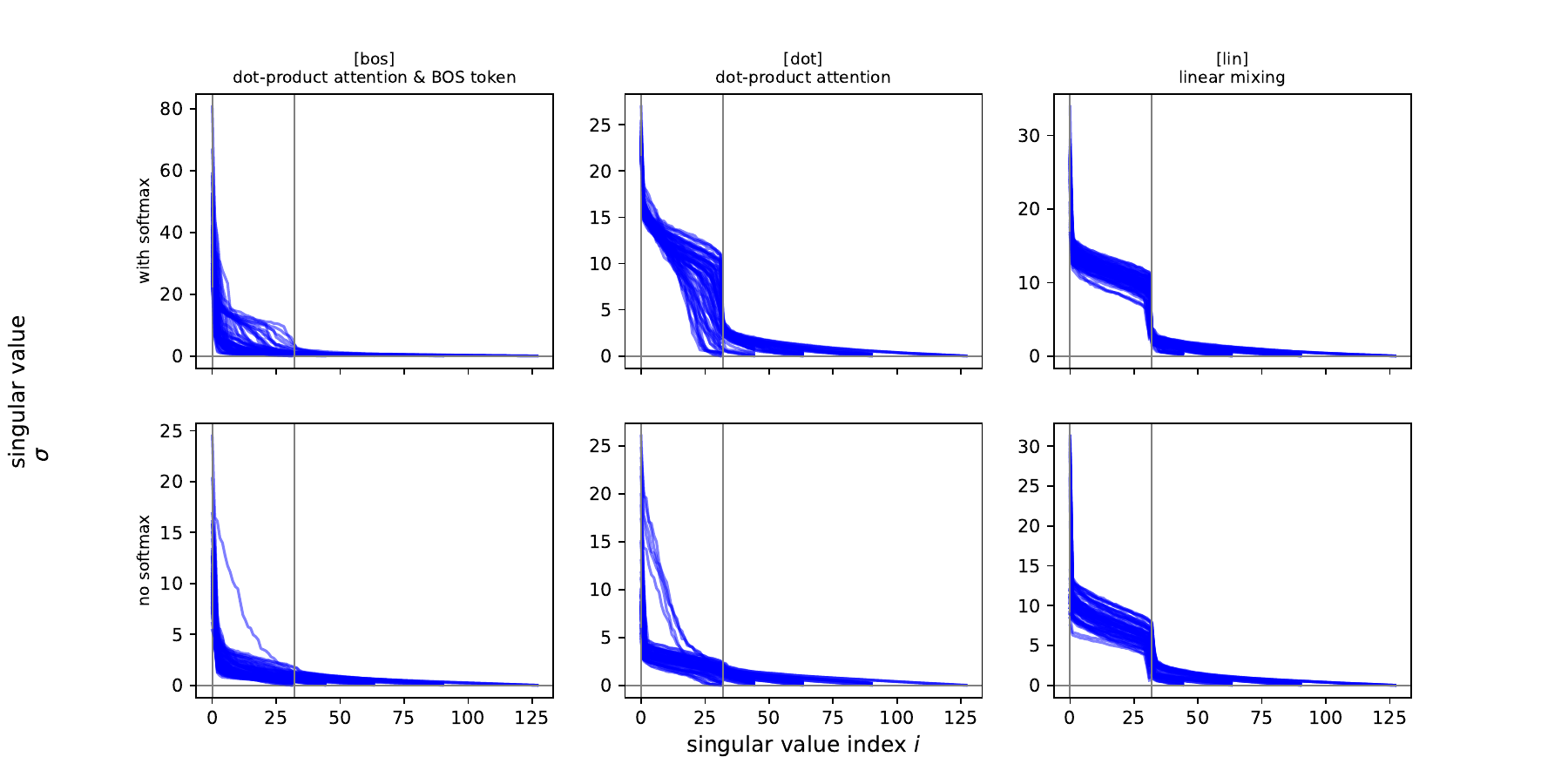}

    \caption{\textbf{Singular values of $W_1$.} We show the results for all models  from Fig.~\ref{fig:learning-regimes} with $T=32$, where $p,d \geq T$ and the accuracy is at least 99\%. Some qualitative differences are visible for \MBOS{} and \Mdot{}. }
    \label{fig:singular-values}
\end{figure}

\subsection{Varying the number of tokens in the alphabet}\label{app::sec:varytokens}

In Fig.~\ref{app::fig::T15} and~\ref{app::fig:T64} we change the number of tokens that may occur in the alphabet from $T=32$ in the experiments from the main text to $T=15$ and $T=64$ respectively. The sequence length is kept fixed at $L=10$.

\begin{figure}[H]
    \centering
    \includegraphics[width=0.75\linewidth]{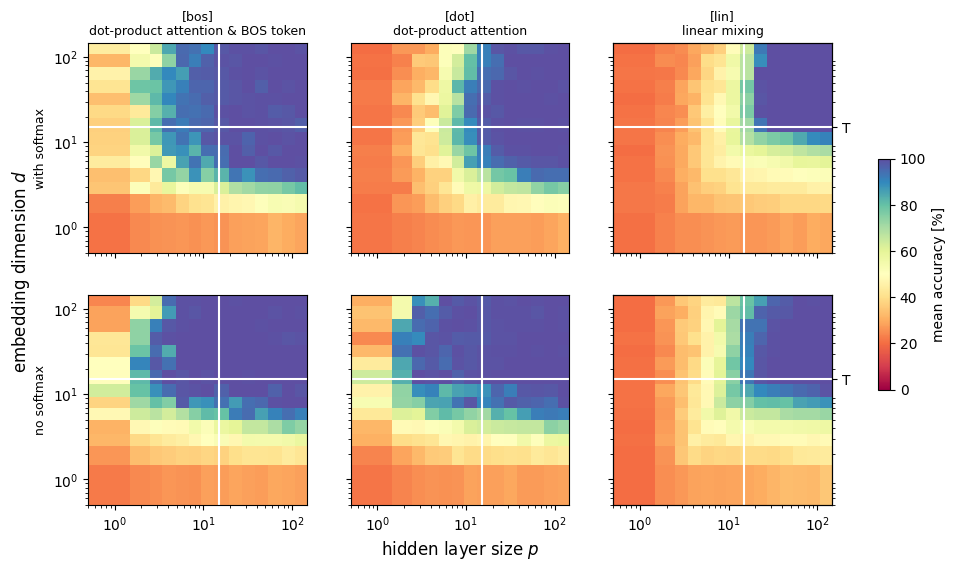}
    \caption{\textbf{Alphabet size $T=15$.} Experiments as in Fig.~\ref{fig:learning-regimes}, except that $T=15$ instead of $T=32$ was used. The sequence length is fixed to $L=10$.}
    \label{app::fig::T15}
\end{figure}

\begin{figure}[H]
    \centering
    \includegraphics[width=0.75\linewidth]{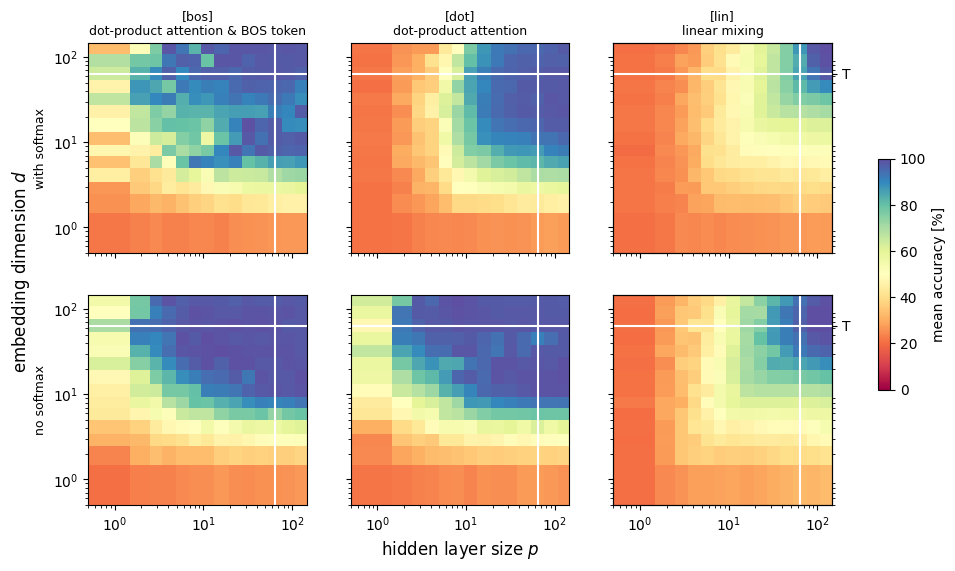}
    \caption{\textbf{Alphabet size $T=64$.} Experiments as in Fig.~\ref{fig:learning-regimes}, except that $T=64$ instead of $T=32$ was used. The sequence length is fixed to $L=10$.}
    \label{app::fig:T64}
\end{figure}

\subsection{Varying the length of the sequence}\label{app::sec:varylength}
In Fig.~\ref{app::fig:L5},~\ref{app::fig:L15} and~\ref{app::fig:L30} we change the length of the sequence to $L=5,15,30$ respectively, while keeping the number of tokens fixed to $T=32$. The sequence length is kept fixed at $L=10$.
Fig.~\ref{app::fig:L5} shows that the learned model can make use of the discrete nature of the task especially for $\Mlinear{}$ (without softmax) and for the dot-product style models with the softmax.
While $L=15$ in Fig.~\ref{app::fig:L15} is only slightly worse than the case of $L=10$ that was used as the example in the main text, the performance $L=30$ in Fig.~\ref{app::fig:L30} is very much deteriorated. Even in the cases without the softmax performance is far from what would be possible theoretically.
We hypothesise that the optimization with the increased number of possible inputs becomes much harder.
Interestingly, Fig.~\ref{app::fig:L30} even seems to show a double descent behavior along the embedding dimension axis.

\begin{figure}[H]
    \centering
    \includegraphics[width=0.75\linewidth]{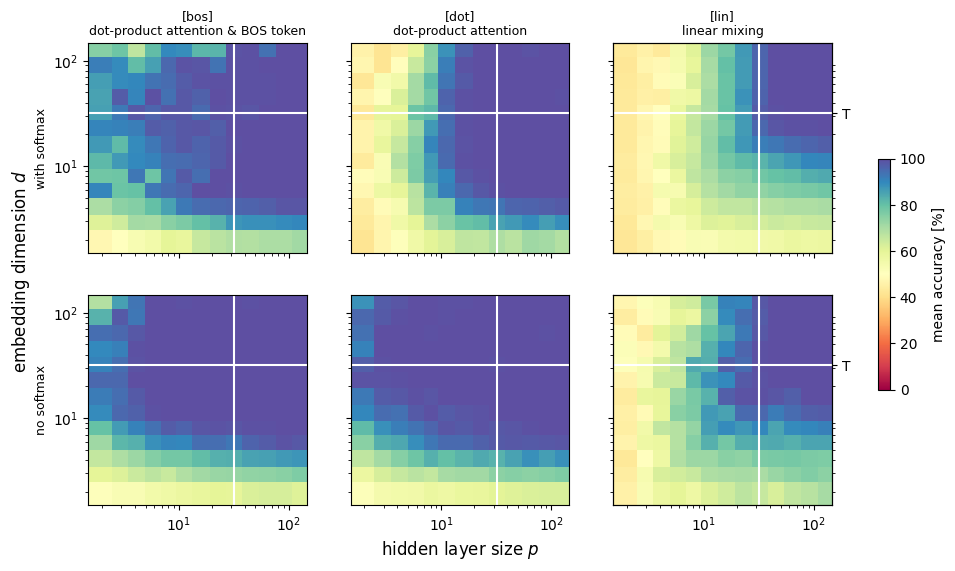}
    \caption{\textbf{Sequence length $L=5$.} Experiments as in Fig.~\ref{fig:learning-regimes}, except that $L=5$ instead of $L=10$ was used. The alphabet size is fixed to $T=32$.}
    \label{app::fig:L5}
\end{figure}

\begin{figure}[H]
    \centering
    \includegraphics[width=0.75\linewidth]{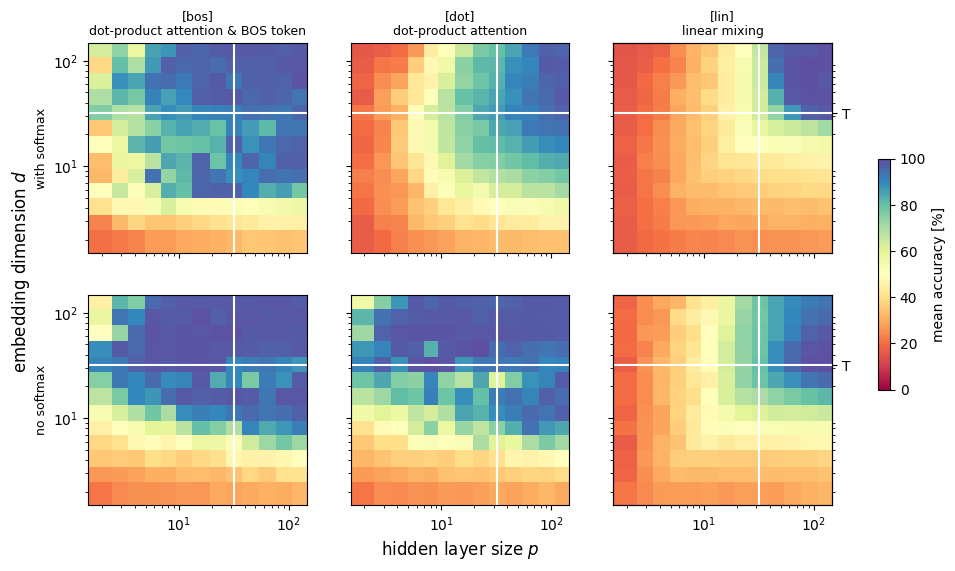}
    \caption{\textbf{Sequence length $L=15$.} Experiments as in Fig.~\ref{fig:learning-regimes}, except that $L=15$ instead of $L=10$ was used. The alphabet size is fixed to $T=32$.}
    \label{app::fig:L15}
\end{figure}

\begin{figure}[H]
    \centering
    \includegraphics[width=0.85\linewidth]{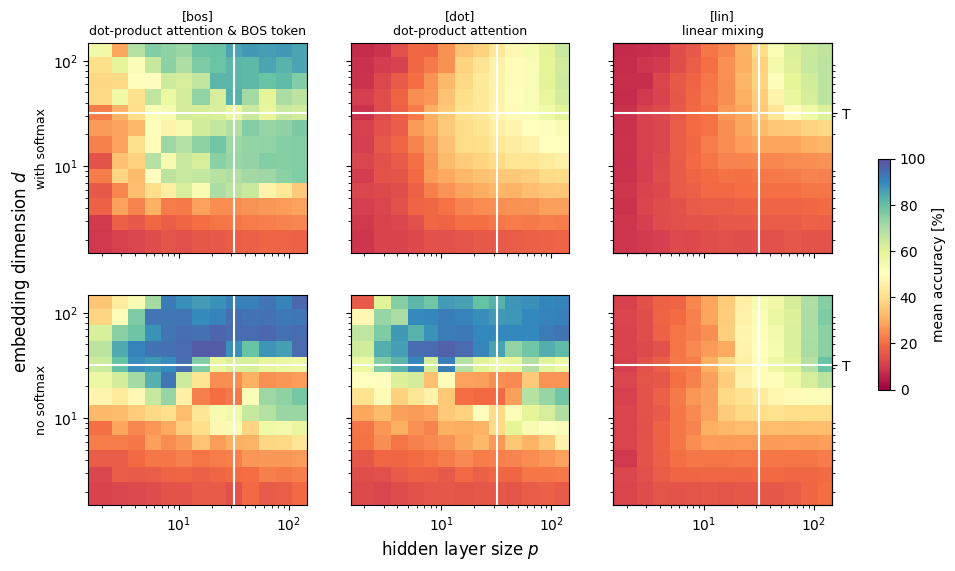}
    \caption{\textbf{Sequence length $L=30$.} Experiments as in Fig.~\ref{fig:learning-regimes}, except that $L=30$ instead of $L=10$ was used. The alphabet size is fixed to $T=32$.}
    \label{app::fig:L30}
\end{figure}

\subsection{Models with two layers}\label{app::sec:2layer}
In this section, we look at the case where we have models that have an extra layer, i.e. instead of the logit output layer after the feed-forward part, we add another layer with the same dimensionality $d$ as the previous layer -- the same mixing and the same hidden layer size -- to then lead into the classification. The parameters are not shared between the layers. Note that this model does not have an extra residual in the MLP layers.

We train the model in the same setting as in the main and report the results for the different architectures, this time with 2 layers, in Fig.~\ref{app::fig:two-layer}.
To compare more easily with the previous set-up, we show the difference between the single and double layer case in Fig.~\ref{app:fig:two-layer-comp}.
Remarkably, the general picture does not seem to change significantly. Indeed, the two layer model is generally better, extending the range where perfect models can be found slightly, but the general trend remains. 
Given this coarse grained experiment we hypothesize, that the extra layer aids the optimization process, and improves robustness in the regions where and the softmax is used to disentangle non-orthogonal embeddings.
More generally, these results are not as comprehensive as our previous results as they are not supported theoretically beyond a single layer. They warrant more detailed in further work with more layers and realistic settings.

\begin{figure}[H]
    \centering
    \includegraphics[width=0.75\linewidth]{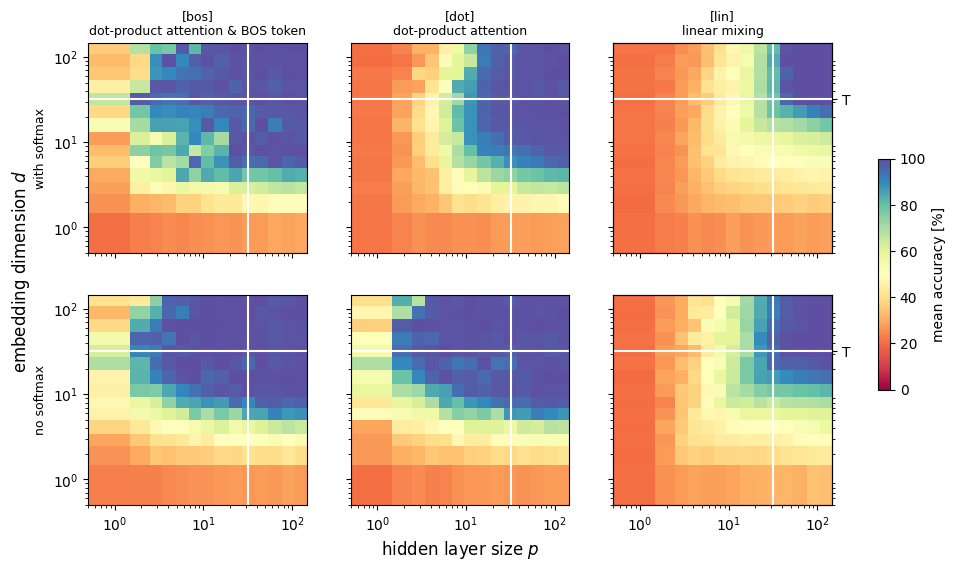}
    \caption{\textbf{Two-layer performance.} Experiments as in Fig.~\ref{fig:learning-regimes} with $T=32, L=10$ but for models where the attention block is repeated twice as described in Section~\ref{app::sec:2layer}.}
    \label{app::fig:two-layer}
\end{figure}

\begin{figure}[H]
    \centering
    \includegraphics[width=0.75\linewidth]{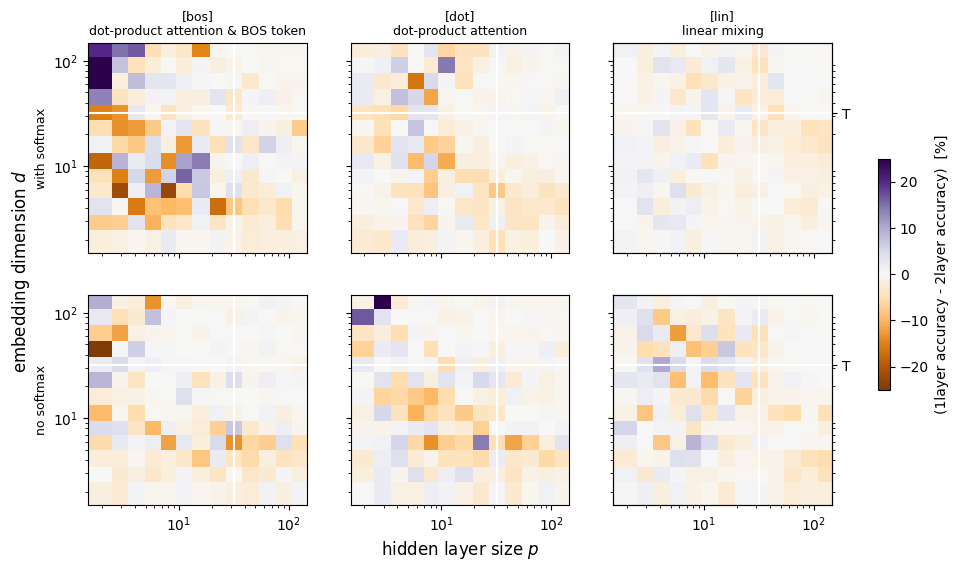}
    \caption{\textbf{Comparison between one and two-layer models.} Difference between the accuracy of a single and two layer attention model, for different mixing layers and hyperparameter setups. Experiments as in Fig.~\ref{fig:learning-regimes} for a single layer attention model, and as in Fig.~\ref{app::fig:two-layer} for the two layer model.}
    \label{app:fig:two-layer-comp}
\end{figure}

\end{document}